\documentclass{article} % For LaTeX2e
\usepackage{iclr2027_conference,times}

\usepackage{amsmath,amsfonts,bm}

\def\eqref#1{equation~\ref{#1}}
\def\1{\bm{1}}

\DeclareMathAlphabet{\mathsfit}{\encodingdefault}{\sfdefault}{m}{sl}
\SetMathAlphabet{\mathsfit}{bold}{\encodingdefault}{\sfdefault}{bx}{n}

\usepackage[utf8]{inputenc} % allow utf-8 input
\usepackage[T1]{fontenc}    % use 8-bit T1 fonts
\usepackage{hyperref}       % hyperlinks
\usepackage{url}            % simple URL typesetting
\usepackage{booktabs}       % professional-quality tables
\usepackage{amsfonts}       % blackboard math symbols
\usepackage{nicefrac}       % compact symbols for 1/2, etc.
\usepackage{microtype}      % microtypography
\usepackage{xcolor}         % colors
\usepackage{graphicx}
\usepackage{algorithm}
\usepackage{algpseudocode}
\usepackage{amsmath}
\usepackage{multirow}
\usepackage{amssymb}
\usepackage{pifont}
\usepackage{wrapfig}
\usepackage{caption}
\usepackage{subcaption}
\usepackage{amsthm}
\usepackage{thmtools}
\usepackage{thm-restate}
\usepackage{dsfont}

\usepackage{makecell}

\usepackage{enumitem}
\usepackage[most]{tcolorbox}
\newcommand*{\defeq}{\stackrel{\text{def}}{=}}

\usepackage[table]{xcolor}  % for \cellcolor and table color support
\usepackage{pifont}          % for \ding{51} and \ding{55}
\usepackage{colortbl}        % usually pulled in by xcolor[table]

\title{Soft Strategy Selection for \\Batch-Mode Active Learning}

\author{Rushil Gupta$^{1}$, Romain Lopez$^{1,2}$\\[4pt]
$^{1}$Department of Computer Science, Courant Institute School of Mathematics,\\
Computing, and Data Science, New York University\\
$^{2}$Department of Biology, New York University\\
\texttt{\{rushil.gupta, romain.lopez\}@nyu.edu}
}

\newcommand{\se}[1]{{\tiny$\pm$#1}}
\iclrfinalcopy % Uncomment for camera-ready version, but NOT for submission.
\begin{document}

\maketitle
\begin{abstract}
Real-world deployment of active learning (AL) typically forces practitioners to choose an acquisition strategy before any data is labeled. This is a daunting task: strategy performance varies widely across settings (e.g., datasets and surrogate model architectures) and cannot be assessed without deployment. Existing strategy selection methods explore one strategy from a portfolio at each round and identify the optimal one using bandit feedback or model retraining. Many acquisition rounds are therefore spent exploring strategies rather than collecting the most informative data. Such overhead is a major barrier to AL-driven design of high-throughput experiments, such as genetic perturbation screens and directed evolution, where AL runs consist of only a few rounds with large batch sizes. This regime permits a natural alternative: acquiring data using multiple AL strategies within a single batch. We refer to this as \textit{soft strategy selection} and introduce FractAL, a method specifically designed for this task. FractAL infers a per-strategy reward using influence-function-based data attribution, which requires no additional retraining, and then computes budget shares for each strategy in the portfolio using online mirror descent. We benchmark FractAL across seven active learning setups spanning classification, regression, and genetic perturbation effect prediction. The results highlight that strategy selection is a hard problem: every existing method performs worse than random sampling on at least one setup. FractAL, however, matches or outperforms every baseline, including random sampling, on all seven setups. Its allocations concentrate budget on the strongest strategies in the portfolio while pruning the weakest. FractAL is therefore a reliable choice for real-world deployments, where the optimal strategy is unknown in advance, an important step towards making active learning practical for high-throughput experiments and modern scientific discovery.
\end{abstract}

\section{Introduction}
Designing informative scientific experiments is a central challenge across the life and chemical sciences, arising in directed evolution of proteins~\citep{tran2026rapid}, antibody engineering~\citep{frey2025lab}, perturbation biology~\citep{rood2024toward}, and molecular design~\citep{smith2018less}. In these fields, researchers decide which genes, proteins, antibodies, or compounds to include in each experiment. Active Learning (AL) automates this process. At each experimental round $t \in [T]$, an acquisition strategy $\mathcal{A}$ selects a batch $\mathcal{B}^t$ of size $B$ from the input pool $\mathcal{D}_\text{pool}$. Then, outcomes are measured at selected inputs, and the surrogate model $f_\theta$ is retrained on the augmented training set $\mathcal{D}_\text{train}$~\citep{settles2009active, ren2021survey}. The goal of a strategy is to maximize model performance under experimental budget constraints. 

Many AL strategies have been proposed, targeting model uncertainty~\citep{houlsby2011bayesian}, batch diversity~\citep{sener2017active}, pool representativeness~\citep{yehuda2022active,hacohen2022active}, or combinations of these. However, the number of strategies keeps growing, and no single one performs well across domains, datasets, and model architectures~\citep{munjal2022towards, hacohen2022active, luth2023navigating, werner2024cross}. 

Adaptive strategy selection methods address this problem by choosing a single strategy $\mathcal{A}^t$ at each round $t$ out of a portfolio $\mathcal{P} = \{\mathcal{A}_1, \ldots, \mathcal{A}_K\}$. This may be naturally framed as a multi-armed bandit (MAB) problem~\citep{baram2004online, hsu2015active, pang2018dynamic} in which each strategy is one arm of the MAB, and at each round the learner observes a reward from the previous round (e.g., accuracy on a validation dataset) and selects the next arm using a bandit solver. Strategy-wise performance may also be estimated by retraining the model after removing a subset of training points selected according to that strategy's own criterion~\citep{hacohen2024select}.

Both approaches are, however, poorly suited to the design of high-throughput biological experiments~\citep{rood2024toward,huang2024sequential,yang2025active,demeo2025active}, where assays efficiently provide results for thousands of inputs at once, but experimental time remains dominated by biological timescales. In these settings, with few rounds (small $T$) and large batch sizes (large $B$), relying on a single strategy at each round is unnecessarily risky.

We instead introduce the problem of \emph{soft strategy selection}. In this setting, we allocate fractions $p_k^t$ of each round's budget $B$ to all strategies in the portfolio $\mathcal{P}$, where the vector $\mathbf{p}^t \in \Delta^{K-1}$ lies in the $(K-1)$-dimensional probability simplex. This results in a batch $\mathcal{B}^t = \bigcup_{k=1}^K \mathcal{B}^t_k$, composed of strategy-specific sub-batches. We propose FractAL, the first method designed to address this problem. FractAL estimates a strategy-wise reward vector $\mathbf{r}^t$ by attributing changes in validation risk to individual strategies via influence functions~\citep{koh2017understanding}, and then updates the allocations using online mirror descent on the probability simplex.

\begin{wrapfigure}{r}{0.4\textwidth}
    \vspace{-\intextsep}
    \centering
    \includegraphics[width=0.9\linewidth]{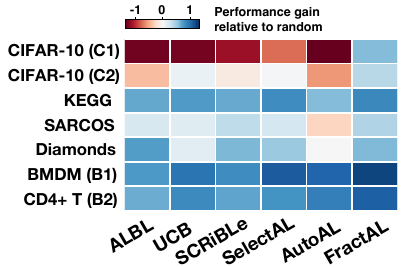}
    \caption{Performance gain over random sampling for the six adaptive strategy selection methods, across the seven setups of Section~\ref{sec:expt} (Appendix~\ref{app:perf-setups}).}
    \label{fig:mean_improv}
    \vspace{-1cm}
\end{wrapfigure}
After introducing the necessary background (Section~\ref{sec:bg}), we present FractAL in detail (Section~\ref{sec:method}). We demonstrate that FractAL surpasses bandit- and retraining-based baselines on multiple classification and regression problems, including recent high-throughput genetic perturbation screening datasets (Figure~\ref{fig:mean_improv}, Section~\ref{sec:expt}). These experiments also highlight that FractAL concentrates budget on the strongest strategy in each portfolio while pruning the weakest. Moreover, it is the only strategy selection method that outperforms random selection across all datasets (Figure~\ref{fig:mean_improv}), suggesting that fractional budget allocation is key to improving AL performance.

\section{Background}
\label{sec:bg}
\begin{table}[t]
    \caption{Comparison with related work on three axes: support for batch sampling, fractional budget allocation across the strategy portfolio, and the ability to operate without additional model training.
    }
    \label{tab:method_comparison}
    \centering
    \small
    \definecolor{mygreen}{RGB}{0,130,0}
    \definecolor{Gray}{gray}{0.92}
    \begin{tabular}{l ccc}
        \toprule
        Related work & Batch Sampling & \makecell{Fractional Budget\\Allocation} & \makecell{No Retraining /\\Aux. Models} \\
        \midrule
\textbf{ALBL}~{\footnotesize\citep{hsu2015active}}\cellcolor{Gray}     & \cellcolor{Gray}{\color{red}\large\ding{55}}    & \cellcolor{Gray}{\color{red}\large\ding{55}}    & \cellcolor{Gray}{\color{mygreen}\large\ding{51}} \\
        \textbf{AutoAL}~{\footnotesize\citep{wang2024autoal}}& {\color{mygreen}\large\ding{51}}& {\color{red}\large\ding{55}}    & {\color{red}\large\ding{55}} \\
        \textbf{SelectAL}~{\footnotesize\citep{hacohen2024select}}\cellcolor{Gray}  & \cellcolor{Gray}{\color{mygreen}\large\ding{51}}                & \cellcolor{Gray}{\color{red}\large\ding{55}}                    & \cellcolor{Gray}{\color{red}\large\ding{55}} \\
        \textbf{FractAL} \emph{(this work)}        & {\color{mygreen}\large\ding{51}}& {\color{mygreen}\large\ding{51}}& {\color{mygreen}\large\ding{51}} \\
        \bottomrule
    \end{tabular}
\end{table}

FractAL builds on two distinct lines of work: adaptive strategy selection for active learning and influence functions for data attribution. A comparison with existing adaptive strategy selection methods appears in Table~\ref{tab:method_comparison}.

\subsection{Adaptive Strategy Selection for Active Learning}
\label{bg:adapt}
Adaptive strategy selection has been well studied, and existing work falls broadly into three paradigms: bandit-based, retraining-based, and differentiable search methods.

\paragraph{Bandit-Based Strategy Selection.} Early approaches formulate strategy selection as a multi-armed bandit (MAB) problem where AL strategies act as experts~\citep{hsu2015active, baram2004online, pang2018dynamic}. For example, ALBL~\citep{hsu2015active} relies on the Exp4.P solver~\citep{beygelzimer2011contextual} with importance-weighted accuracy rewards. All these methods select a single point per iteration; in the batch setting, this amounts to committing each round to a single strategy, preventing strategy-level reward attribution within a batch.

\paragraph{Retraining-Based Strategy Selection.} SelectAL~\citep{hacohen2024select} uses a derivative-based surrogate test to switch between predefined \textit{low-budget} and \textit{high-budget} strategy families. At each round, each strategy selects points to remove from the training set, and the strategy yielding the largest risk increase after retraining is selected for the next round. The original specification of SelectAL is limited to $K=2$ strategies, but its main selection principle generalizes to any portfolio size, so we adopt it as a baseline. Evaluating $K$ strategies over $T$ AL rounds requires $O(KT)$ model retrainings, which becomes expensive when the surrogate model $f_\theta$ is costly to train.

\paragraph{Differentiable Strategy Search.} AutoAL~\citep{wang2024autoal} makes strategy selection differentiable via bi-level optimization with two co-trained auxiliary networks (SearchNet and FitNet). The bi-level optimization, however, introduces a substantial number of hyperparameters and adds computational overhead per iteration.

\subsection{Influence Functions}
\label{bg:if}
Influence functions~\citep{hampel1974influence} quantify the impact of individual datapoints on a model's performance. A datapoint's influence estimates how the validation loss would change if that point were removed from the training set. This is done by first approximating the resulting perturbation to the model parameters, and then propagating it to the validation loss via a first-order Taylor expansion around the converged parameters~\citep{koh2017understanding}.

Influence functions have previously been used as forward-looking AL acquisition heuristics, by estimating the influence at unmeasured inputs $x \in \mathcal{D}_\text{pool}$ and selecting the highest-ranked candidates~\citep{liu2021influence, xia2023reliable}. This approach, however, requires the construction of a pseudo-label for each unmeasured input (e.g., enumerating all possible labels, or training a different surrogate model)~\citep{xu2019understanding}, which makes it impractical for single- and multi-task regression settings. Here, we instead estimate per-strategy rewards using \emph{retrospective} influence of training points $x \in \mathcal{D}_\text{train}$ whose outputs have already been measured.

\section{FractAL: Fractional Active Learning}
\label{sec:method}
FractAL has two main components. We first define a strategy-wise reward signal, computed via influence-based attribution instead of retraining (Section~\ref{sec:str-infl-our}). Based on this signal, we then update the allocation $\mathbf{p}^{t+1}$ via online mirror descent on the probability simplex $\Delta^{K-1}$ (Section~\ref{sec:our-alg}).

\subsection{Strategy-wise Influence Attribution}
\label{sec:str-infl-our}
At round $t \in [T]$, our training data $\mathcal{D}^t_\text{train} = \bigcup_{t' = 0}^t \left(x_i, y_i \right)_{i \in \mathcal{B}^{t'}}$ is composed of the inputs and outputs for the union of all batches up to round $t$, where each batch $\mathcal{B}^t = \bigcup_{k=1}^K \mathcal{B}^t_k$ is composed of sub-batches $\mathcal{B}^t_k$ of size $n_k^t = |\mathcal{B}^t_k|$ selected by strategy $\mathcal{A}_k$. After updating the model parameters $\hat{\theta}^t$ by training the model $f_\theta$ on $\mathcal{D}^t_\text{train}$, we assess the contribution of each strategy's sub-batch $\mathcal{B}_k^t$ to the validation risk.
For $\boldsymbol{\epsilon} \in \mathbb{R}^K$, we define the perturbed training risk:
\begin{equation}
     \mathcal{F}_t(\boldsymbol{\epsilon}, \theta) := \mathcal{R}(\mathcal{D}^t_\text{train}; \theta) + \sum_{k=1}^K\frac{\epsilon_k}{n_k^t} \sum_{(x, y) \in \mathcal{B}^t_k} \mathcal{L}\left((x, y); \theta\right), \label{eq:perturb_risk}
\end{equation}
in which each coefficient $\epsilon_k$ reweights every point acquired by strategy $\mathcal{A}_k$. We denote the minimizer of the perturbed risk as $\hat{\theta}^t_{\boldsymbol{\epsilon}} := \arg\min_\theta \mathcal{F}_t(\boldsymbol{\epsilon}, \theta)$ so that $\hat{\theta}^t_{\boldsymbol{0}} = \hat{\theta}^t$ recovers the unperturbed parameters. 
The \textit{strategy-wise influence vector} $\mathcal{I}( \hat{\theta}^t) \in \mathbb{R}^K$ is defined as the gradient with respect to $\boldsymbol{\epsilon}$ of the validation risk evaluated at the perturbed parameters, taken at $\boldsymbol{\epsilon} = \mathbf{0}$. Its $k$-th component $\mathcal{I}_k(\hat{\theta}^t)$  thus
quantifies the first-order change in validation risk under upweighting of $\mathcal{A}_k$'s sub-batch. Under standard regularity assumptions on the loss function and the optimality of $\hat{\theta}^t$, it admits a closed-form expression:
\begin{restatable}[Strategy-wise Influence Decomposition]{prop}{strategyinfluence}
\label{lem:strategy-influence}
Assume the loss $\mathcal{L}$ is twice continuously differentiable in $\theta$ near $\hat\theta^t$, that $\hat\theta^t$ minimizes the training risk, and that the Hessian
$H_{\hat\theta^t} = \nabla^2_\theta \mathcal{R}(\mathcal{D}^t_{\text{train}};
\hat\theta^t)$ is positive definite. Then the $k$-th component of the influence vector is given by:
\begin{equation}
\label{eq:infl-cal-our}
    \mathcal{I}_k(\hat{\theta}^t) = -\nabla_{\theta} \mathcal{R}(\mathcal{D}_{\text{val}};\hat{\theta}^t)^\top H_{\hat{\theta}^t}^{-1} \left( \frac{1}{n^t_k} \sum_{(x, y) \in \mathcal{B}^t_k} \nabla_\theta \mathcal{L}\left((x, y); \hat{\theta}^t\right) \right).
\end{equation}
\end{restatable}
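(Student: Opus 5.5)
The plan is the classical implicit-function-theorem argument of \citet{koh2017understanding}, applied with a vector of perturbation weights $\boldsymbol{\epsilon}\in\mathbb{R}^K$ rather than a scalar. Write $g_k(\theta) := \frac{1}{n_k^t}\sum_{(x,y)\in\mathcal{B}^t_k}\nabla_\theta\mathcal{L}((x,y);\theta)$ for the mean gradient of sub-batch $k$.

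\textbf{Step 1: first-order condition.} By optimality, $\hat\theta^t_{\boldsymbol{\epsilon}}$ satisfies $\nabla_\theta\mathcal{F}_t(\boldsymbol{\epsilon},\theta)=0$. Written out, this is
\[
G(\boldsymbol{\epsilon},\theta):=\nabla_\theta\mathcal{R}(\mathcal{D}^t_{\text{train}};\theta)+\sum_{k=1}^K\epsilon_k\,g_k(\theta)=0 .
\]
At $\boldsymbol{\epsilon}=\mathbf{0}$, the point $\hat\theta^t$ is a minimizer of the training risk, so $G(\mathbf{0},\hat\theta^t)=0$.

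\textbf{Step 2: implicit function theorem.} Since $\mathcal{L}$ is $C^2$ near $\hat\theta^t$, the map $G$ is $C^1$ jointly in $(\boldsymbol{\epsilon},\theta)$. Its two partial derivatives at $(\mathbf{0},\hat\theta^t)$ are:
\begin{itemize}[leftmargin=*]
\item $\partial_\theta G = H_{\hat\theta^t}$, which is invertible because it is positive definite;
\item $\partial_{\epsilon_k}G = g_k(\hat\theta^t)$.
\end{itemize}
The implicit function theorem then gives a $C^1$ map $\boldsymbol{\epsilon}\mapsto\theta(\boldsymbol{\epsilon})$ near $\mathbf{0}$ with $\theta(\mathbf{0})=\hat\theta^t$ and
\[
\frac{\partial\theta(\boldsymbol{\epsilon})}{\partial\epsilon_k}\Big|_{\boldsymbol{\epsilon}=\mathbf{0}}=-H_{\hat\theta^t}^{-1}g_k(\hat\theta^t).
\]

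\textbf{Step 3: identify $\theta(\boldsymbol{\epsilon})$ with the argmin.} This step is the main subtlety, because the argmin in the definition of $\hat\theta^t_{\boldsymbol{\epsilon}}$ must coincide with the implicit branch. By continuity, $\nabla^2_\theta\mathcal{F}_t(\boldsymbol{\epsilon},\theta(\boldsymbol{\epsilon}))$ remains positive definite for small $\boldsymbol{\epsilon}$. Hence $\theta(\boldsymbol{\epsilon})$ is a strict local minimizer of $\mathcal{F}_t(\boldsymbol{\epsilon},\cdot)$, and it is the unique critical point in a neighbourhood of $\hat\theta^t$. We interpret $\hat\theta^t_{\boldsymbol{\epsilon}}$ as this local minimizer, following the standard convention in influence-function analysis; with global convexity it is the global minimizer. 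So $\hat\theta^t_{\boldsymbol{\epsilon}}=\theta(\boldsymbol{\epsilon})$ near $\mathbf{0}$.

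\textbf{Step 4: chain rule.} Apply the chain rule to $\boldsymbol{\epsilon}\mapsto\mathcal{R}(\mathcal{D}_{\text{val}};\hat\theta^t_{\boldsymbol{\epsilon}})$. This needs the validation loss to be differentiable at $\hat\theta^t$, which follows from the same smoothness assumption on $\mathcal{L}$. Then
\[
\mathcal{I}_k(\hat\theta^t)=\nabla_\theta\mathcal{R}(\mathcal{D}_{\text{val}};\hat\theta^t)^\top\,\frac{\partial\hat\theta^t_{\boldsymbol{\epsilon}}}{\partial\epsilon_k}\Big|_{\mathbf{0}},
\]
and substituting the derivative from Step 2 yields exactly \eqref{eq:infl-cal-our}.

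The calculation itself is routine. The only care needed is in Step 3, in justifying that the argmin is differentiable in $\boldsymbol{\epsilon}$, and we handle it through local uniqueness from the positive-definite Hessian.
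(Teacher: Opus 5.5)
Your proposal is correct and follows essentially the same route as the paper's proof. Both use the first-order optimality condition for $\mathcal{F}_t$, the implicit function theorem (made applicable by the positive-definite Hessian), and the chain rule applied to $\mathcal{R}(\mathcal{D}_{\text{val}};\hat\theta^t_{\boldsymbol{\epsilon}})$.

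The differences are cosmetic. You read the Jacobian $-H_{\hat\theta^t}^{-1}g_k(\hat\theta^t)$ directly off the implicit function theorem, whereas the paper Taylor-expands the optimality condition; your route also sidesteps the paper's stated $O(\|\boldsymbol{\epsilon}\|^2)$ remainder, which is only $o(\|\boldsymbol{\epsilon}\|)$ under mere $C^2$ smoothness, though this does not affect the derivative at $\mathbf{0}$. Your Step 3 also makes explicit the local-minimizer reading of $\hat\theta^t_{\boldsymbol{\epsilon}}$ that the paper uses implicitly when it refers to ``the minimizer near $\hat\theta^t$''.
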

The proof appears in Appendix~\ref{app:proof}. We define the \textit{reward} of strategy $k$ at round $t$ as $r_k^t \triangleq -\mathcal{I}_k(\hat{\theta}^t)$. The sign of $r_k^t$ indicates whether $\mathcal{A}_k$'s sub-batch reduced ($r_k^t > 0$) or increased ($r_k^t < 0$) validation risk, while the magnitude reflects the size of the effect.

\paragraph{Implementation details.} 
The inverse Hessian-vector product $\mathbf{v}^t = \nabla_{\theta}\mathcal{R}(\mathcal{D}_{\text{val}};\hat{\theta}^t)^\top H_{\hat{\theta}^t}^{-1}$ from Eq.~\ref{eq:infl-cal-our} is independent of $k$, so we compute it once per round via the conjugate gradient method~\citep{martens2010deep} and then obtain each $r_k^t$ via a single inner product. When $f_\theta$ is parameterized as a deep neural network, we compute the Hessian with respect to the last layer parameters only~\citep{koh2017understanding} and regularize it using a damping term ($\lambda=0.01$). Both measures help ensure positive definiteness (see Appendix~\ref{app-prac-est}).

\subsection{Fractional Portfolio Allocation}
\label{sec:our-alg}

Raw reward values $\mathbf{r}^t$ can be noisy and lead to unstable allocation updates, particularly for strategies $\mathcal{A}_k$ where the sub-batch $\mathcal{B}_k^t$ contained few samples. We therefore use the smoothed reward vector $\mathbf{\bar{r}}^t$, defined as an exponentially weighted moving average over rounds:
\begin{equation}
    \bar{\mathbf{r}}^t = (1 - \alpha) \bar{\mathbf{r}}^{t-1} + \alpha\, \mathbf{r}^t,
\end{equation}
with $\bar{\mathbf{r}}^0 = \mathbf{0}$, and $\alpha = 0.5$ throughout our experiments. 
An intermediate allocation $\mathbf{q}^{t+1}$ is defined as the minimizer of the KL-regularized negative reward: 
\begin{equation}
\label{eq:opt}
    \mathbf{q}^{t+1} = \arg\min_{\mathbf{q} \in \Delta^{K-1}} \left[-\eta^t \langle \bar{\mathbf{r}}^t, \mathbf{q}\rangle + D_\mathrm{KL}(\mathbf{q} \,\|\, \mathbf{p}^t)\right],
\end{equation}
where $\eta^t = \eta_0 / \max_{k} |\bar{r}^t_{k}|$ is an adaptive step size, with $\eta_0=3$ across all experiments. This proximal-point form, standard in online convex optimization, prevents the allocation from changing too abruptly between rounds. Solving Eq.~\ref{eq:opt} yields the exponentiated gradient step:
\begin{equation}
    q_k^{t+1} = \frac{p_k^t \exp(\eta^t \bar{r}^t_k)}{\sum_{k'=1}^K p_{k'}^t \exp(\eta^t \bar{r}^t_{k'})}.
\end{equation}

The final allocation $\mathbf{p}^{t+1}$ mixes $\mathbf{q}^{t+1}$ with the
uniform distribution over the $K$ strategies, using mixing weight
$Kp_\mathrm{min}$:
\begin{equation}
\label{meth-pcal}
    p_k^{t+1} = (1 - Kp_\mathrm{min})\, q_k^{t+1} + p_\mathrm{min},
\end{equation}
where $p_\mathrm{min} = 1/B$ to guarantee that every strategy has a nonzero allocation. The allocation $\mathbf{p}^{t+1}$ then determines sub-batch sizes for the next round. Finally, strategies select their sub-batches sequentially in decreasing order of allocation, each excluding points already selected, so that the resulting sub-batches are disjoint (Algorithm~\ref{alg:FractAL}, Appendix~\ref{app:meth-detail}).

\begin{algorithm}[t]
\caption{FractAL: Fractional Allocation for Active Learning Strategy Portfolios}
\label{alg:FractAL}
\begin{algorithmic}[1]
\Require Strategy portfolio $\mathcal{P} = \{\mathcal{A}_1, \dots, \mathcal{A}_K\}$; pool $\mathcal{D}_\text{pool}$; initial labeled set $\mathcal{D}_\text{init}$; validation set $\mathcal{D}_\text{val}$; batch size $B$; rounds $T$.
\State Initialize $\mathcal{D}^0_\text{train} \gets \mathcal{D}_\text{init}$, $\mathbf{p}^1 \gets (1/K, \dots, 1/K) \in \Delta^{K-1}$, $\bar{\mathbf{r}}^0 \gets \mathbf{0}$
\For{$t = 1, 2, \dots, T-1$}
\State Allocate integer budgets $n^t_k$ summing to $B$
\State Form disjoint sub-batches $\mathcal{B}^t_k$ using $\mathcal{A}_k$ \Comment{App.~\ref{app:meth-detail}}
    \State Form composite batch $\mathcal{B}^t = \bigsqcup_{k=1}^K \mathcal{B}^t_k$ and update $\mathcal{D}^t_\text{train} \gets \mathcal{D}^{t-1}_\text{train} \cup \mathcal{B}^t$
    \State Train $f_\theta$ on $\mathcal{D}^t_\text{train}$ to obtain $\hat{\theta}^t$
    \State Compute reward $r^t_k = -\mathcal{I}_k(\hat{\theta}^t)$ for $k = 1, \dots, K$ \Comment{Eq.~\ref{eq:infl-cal-our}}
    \State Smooth: $\bar{r}^t_k \gets (1-\alpha) \bar{r}^{t-1}_k + \alpha\, r^t_k$
    \State Exponentiated gradient step: $q_k^{t+1} \gets \frac{p_k^t \exp(\eta^t \bar{r}^t_k)}{\sum_{k'=1}^K p_{k'}^t \exp(\eta^t \bar{r}^t_{k'})}$
    \State Enforce exploration floor: $p_k^{t+1} \gets (1 - K p_\mathrm{min}) q_k^{t+1} + p_\mathrm{min}$
\EndFor
\end{algorithmic}
\end{algorithm}

\section{Experiments}
\label{sec:expt}

We evaluate FractAL against bandit-based and retraining-based strategy selection methods across three task types: classification (CIFAR-10), single-task regression (KEGG metabolic reaction network, SARCOS inverse dynamics, and diamond pricing), and multi-task regression (high-throughput genetic perturbation screens). The screening experiments cover two immune cell types: CD4+ T cells and bone-marrow-derived myeloid cells. Throughout these experiments, we report the test performance of the surrogate model. We also assess whether each selection method concentrates
budget on the strongest strategies, ranked by their performance when run
alone for the entire AL run, which we refer to as their
\emph{pure-strategy performance}. Unless mentioned otherwise, metrics are averaged over $5$ data splits and $10$ random seeds per split, with standard error reported over the seeds after averaging across the splits.

\paragraph{Strategy Portfolio.} Within each task type, all strategy selection methods share a common portfolio. For regression tasks, we use random selection (Random), CoreSet~\citep{sener2017active}, ProbCover~\citep{yehuda2022active}, and TypiClust~\citep{hacohen2022active} ($K=4$). For classification, we additionally include BADGE~\citep{ash2019deep} and three uncertainty-based strategies~\citep{settles2009active}: least-confidence (LeastConf), maximum-entropy (MaxEnt), and minimum-margin (MinMargin), for a total of $K=8$ strategies. Further details appear in Appendix~\ref{app:strat-pool}.

\paragraph{Baselines.}
We compare FractAL against five adaptive strategy
selection methods and one non-adaptive control. The first is ALBL~\citep{hsu2015active}, a bandit-based strategy selection method which uses the adversarial Exp4.P solver with the validation metric as feedback. The second uses the same bandit formulation but replaces the Exp4.P solver by the upper confidence bound algorithm (UCB). Third, we cast soft strategy allocation as a linear bandit problem and solve it with SCRiBLe, a self-concordant regularized linear bandit solver~\citep{abernethy2009competing}. Like FractAL, SCRiBLe produces soft allocations, but it observes only bandit feedback. It therefore isolates the contribution of influence-based attribution. Finally, we include the retraining-based method SelectAL~\citep{hacohen2024select} and the differentiable strategy selection method AutoAL~\citep{wang2024autoal}. As a non-adaptive control, we also benchmark a uniform allocation across the
portfolio (Equal Split). Specifications of all the baselines appear in Appendix~\ref{app:baselines}. We also report the performance of the best and worst individual strategies for each setup (in terms of pure-strategy performance). Since those strategies are unknown in advance, these results serve only as a reference range.

\paragraph{Allocation diagnostics.} Qualitatively different strategy allocations can yield similar final test performance, so test metrics alone do not reveal whether a method allocates budget sensibly. A natural desideratum for a soft strategy selection method is to concentrate budget on strategies that perform well and prune those that do not. We quantify this for each setup by first identifying the sets of best- and worst-performing strategies from their pure-strategy test performance (independently of any adaptive method). We then compute the overlap between each method's top-ranked strategies and the best-performing set (Best-Set Overlap), and between its bottom-ranked strategies and the worst-performing set (Worst-Set Overlap), where the number of top- (or bottom-) ranked strategies considered equals the size of the Best- (or Worst-)Set (see Appendix~\ref{app:precision_k}). We report these diagnostics for the three methods that assign an internal score to every strategy at each round (SCRiBLe, SelectAL, and FractAL).

\subsection{Classification Benchmark: CIFAR-10}
\label{clas:intro}
We evaluate FractAL on CIFAR-10~\citep{krizhevsky2009learning} using a ResNet-18 surrogate model~\citep{he2016deep} trained from scratch over $T=10$ rounds, with test accuracy as the performance metric. We follow the pre-processing and training protocol of \citet{hacohen2024select} and consider two portfolio configurations. Configuration 1 (C1) is a three-strategy portfolio --- CoreSet, Random, and ProbCover --- chosen to span weak, neutral, and strong strategies based on their pure-strategy performance (Table~\ref{tab:clas-all-pure}). This setting tests whether FractAL concentrates budget on the strong strategy and prunes the weak one. Configuration 2 (C2) deploys all eight strategies simultaneously. Because most of the eight strategies perform within one standard error of each other and of Random (Table~\ref{tab:clas-all-pure}), the selection problem in this setup is considerably harder. Across both configurations, we use $|\mathcal{D}_{\text{init}}| = B = 500$ and $|\mathcal{D}_{\text{val}}| = 1024$. Implementation details, including hyperparameter configurations and optimization protocols, appear in Appendix~\ref{app:cf10-mot}.

\begin{figure}[t]
    \centering
    \begin{subfigure}[t]{0.24\textwidth}
        \centering
        \includegraphics[width=\textwidth]{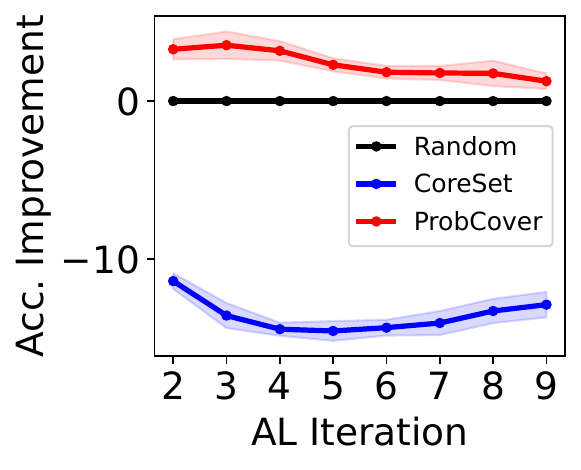}
        \caption{\centering Test performance of single strategies}
      \end{subfigure}
      \begin{subfigure}[t]{0.24\textwidth}
        \centering
        \includegraphics[width=\textwidth]{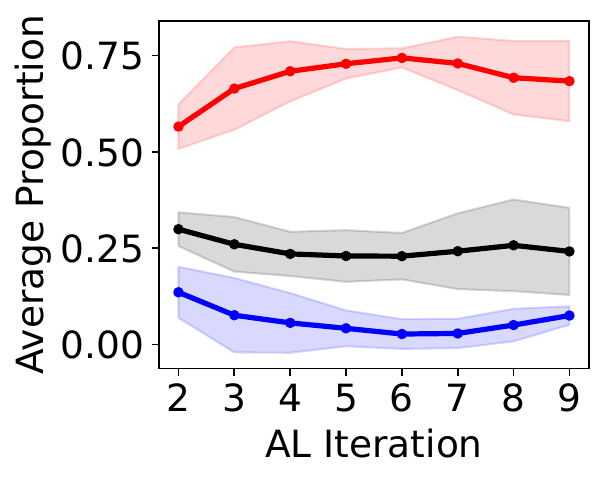}  
        \caption{\centering FractAL allocations}
      \end{subfigure}
      \begin{subfigure}[t]{0.24\textwidth}
        \centering
        \includegraphics[width=\textwidth]{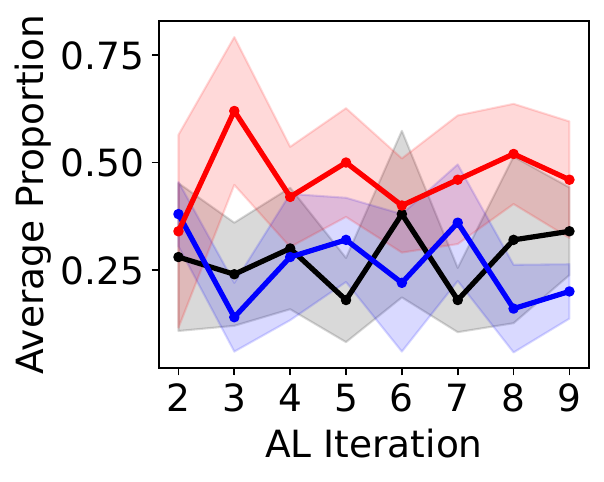}
        \caption{\centering SelectAL allocations}
      \end{subfigure}
      \begin{subfigure}[t]{0.24\textwidth}
        \centering
        \includegraphics[width=\textwidth]{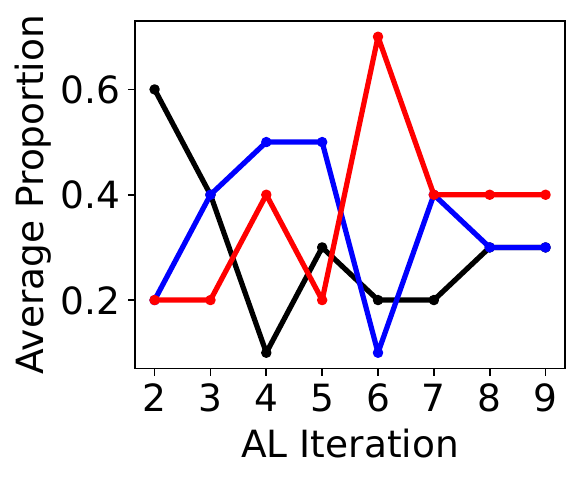}
        \caption{\centering ALBL allocations}
      \end{subfigure}
      \caption{Setup (C1): CIFAR-10. (a) Test accuracy of the three individual strategies. (b--d) Per-round average budget allocation under FractAL, SelectAL, and ALBL. Solid lines denote the mean and shaded bands indicate one standard deviation.
      }
\label{fig:frac_evolution}

\end{figure}

\begin{table}[t]
\centering
\begin{minipage}[t]{0.40\textwidth}
  \centering
  \small
  \caption{Test accuracy ($\uparrow$) on CIFAR-10.}
  \label{cifar-allcomp-results}
\begin{tabular}{lcc}
    \toprule
    Method      & (C1)  & (C2)\\
    \midrule
\multicolumn{3}{l}{\textit{Pure strategies (best/worst in hindsight)}}\\
\shortstack[l]{\textit{Best pure}\\ \phantom{\tiny x}}
  & \shortstack{\textit{71.2}\se{0.1}\\ \tiny\textit{ProbCover}}
  & \shortstack{\textit{71.2}\se{0.1}\\ \tiny\textit{ProbCover}}\\[2pt]
\shortstack[l]{\textit{Worst pure}\\ \phantom{\tiny x}}
  & \shortstack{\textit{58.7}\se{0.2}\\ \tiny\textit{CoreSet}}
  & \shortstack{\textit{58.7}\se{0.2}\\ \tiny\textit{CoreSet}}\\
    \midrule
    \multicolumn{3}{l}{\textit{Non-adaptive controls}}\\
    Random      & 69.4\se{0.1} & 69.4\se{0.1}\\
    Equal Split & 67.8\se{0.1} & 69.1\se{0.1}\\
    \midrule
    \multicolumn{3}{l}{\textit{Adaptive strategy selection methods}}\\
    ALBL        & 67.5\se{0.5} & 68.6\se{0.3}\\
    UCB         & 67.5\se{0.1} & 69.5\se{0.1}\\
    SCRiBLe     & 67.7\se{0.2} & 69.2\se{0.1}\\
    SelectAL    & 68.3\se{0.1} & 69.4\se{0.1}\\
    AutoAL      & 67.5\se{0.1} & 68.3\se{0.1}\\
    FractAL     & \textbf{70.2}\se{0.2} & \textbf{69.8}\se{0.2}\\
    \bottomrule
\end{tabular}
\end{minipage}\hfill
\begin{minipage}[t]{0.56\textwidth}
  \centering
  \footnotesize
  \caption{Test MSE ($\times 10^{-1}$; $\downarrow$) on regression datasets.}
  \label{reg-results}
  \setlength\tabcolsep{0.15cm}

  \begin{tabular}{lccc}
    \toprule
    Method & KEGG & SARCOS & Diamonds\\
    \midrule
    \multicolumn{4}{l}{\textit{Pure strategies (best/worst in hindsight)}}\\
    \shortstack[l]{\textit{Best pure}\\ \phantom{\tiny x}}
      & \shortstack{\textit{0.49}\se{0.00}\\ \tiny \textit{ProbCover}}
      & \shortstack{\textit{0.72}\se{0.01}\\ \tiny \textit{ProbCover}}
      & \shortstack{\textit{0.08}\se{0.01}\\ \tiny \textit{CoreSet}}\\[2pt]
    \shortstack[l]{\textit{Worst pure}\\ \phantom{\tiny x}}
      & \shortstack{\textit{0.97}\se{0.01}\\ \tiny \textit{TypiClust}}
      & \shortstack{\textit{0.79}\se{0.01}\\ \tiny \textit{Random}}
      & \shortstack{\textit{0.12}\se{0.01}\\ \tiny \textit{ProbCover}}\\
    \midrule
    \midrule
    \multicolumn{4}{l}{\textit{Non-adaptive controls}}\\
    Random      & 0.79\se{0.02} & 0.79\se{0.01} & 0.11\se{0.01}\\
    Equal Split & 0.62\se{0.01} & 0.76\se{0.01} & \textbf{0.08}\se{0.01}\\
    \midrule
    \multicolumn{4}{l}{\textit{Adaptive strategy selection methods}}\\
    ALBL        & 0.63\se{0.02} & 0.77\se{0.01} & 0.09\se{0.01}\\
    UCB         & 0.61\se{0.01} & 0.77\se{0.01} & 0.10\se{0.01}\\
    SCRiBLe     & 0.63\se{0.01} & 0.75\se{0.01} & 0.09\se{0.01}\\
    SelectAL    & 0.59\se{0.01} & 0.76\se{0.01} & 0.09\se{0.01}\\
    AutoAL      & 0.65\se{0.01} & 0.82\se{0.01} & 0.10\se{0.01}\\
    FractAL     & \textbf{0.58}\se{0.01} & \textbf{0.74}\se{0.01} & \textbf{0.08}\se{0.01}\\
    \bottomrule
\end{tabular}
\end{minipage}
% \vspace{-5mm}
\end{table}

FractAL achieves the highest test accuracy (70.2 $\pm$ 0.2) in C1 among selection methods and is the only one to surpass the Random baseline (Table~\ref{cifar-allcomp-results}). The allocation dynamics in Figure~\ref{fig:frac_evolution}b-d explain this gap: FractAL down-weights CoreSet's share toward zero and concentrates budget on ProbCover, while ALBL oscillates between strategies and SelectAL allocates roughly equal shares to CoreSet and Random, despite the substantial gap in their pure-strategy performance. FractAL leads both allocation diagnostics by a wide margin (Table~\ref{tab:recall-all}, Welch's t-test against the next best baseline, $p < 0.05$ for both). SCRiBLe, which shares FractAL's soft allocation but observes only validation feedback, does no better than a uniform allocation. Neither scalar validation feedback nor deletion-based scoring separates the strategies as cleanly as influence-based attribution.

In the harder C2 setup, the strategy selection methods are separated by much smaller performance margins. FractAL is still highest among adaptive methods (69.8 $\pm$ 0.2) and is the only one to significantly exceed Random performance (Welch's t-test, $p < 0.05$). The allocation diagnostics are more informative here. FractAL remains well above chance on both (one-sample t-test against chance, $p < 0.05$), with the larger gap on Worst-Set Overlap reflecting its consistent pruning of CoreSet to zero (Table~\ref{tab:recall-all}, Figure~\ref{fig:cf10_8comp_frac_evolution}b), while the closest baseline barely clears chance on either. This setup shows that FractAL remains effective with a larger portfolio. 

\subsection{Regression Benchmarks}
We evaluate FractAL on three regression datasets: KEGG metabolic reaction network~\citep{naeem2011kegg}, SARCOS inverse dynamics~\citep{vijayakumar2000locally}, and diamond pricing (Diamonds; \citealp{vanschoren2014openml}), with input dimensions $27$, $21$, and $29$ respectively. We follow the pre-processing of \citet{holzmuller2023framework} and use a three-layer multi-layer perceptron (MLP) as the surrogate model. We report mean squared error (MSE) on the held-out test set. All methods use $|\mathcal{D}_{\text{init}}| = B = 500$ and $|\mathcal{D}_{\text{val}}| = 1024$ over $T=10$ rounds. Implementation details appear in Appendix~\ref{app:reg-setup}.

FractAL achieves the lowest test MSE among selection methods on all three regression datasets (Table~\ref{reg-results}). FractAL also leads both allocation diagnostics on KEGG and SARCOS, with a tie on Diamonds (Table~\ref{tab:recall-all}). On Diamonds, the margins are narrow because the pure strategies perform similarly (Table~\ref{tab:reg-all-pure}), leaving little room to improve from uniform allocation. Allocations from bandit baselines either oscillate (KEGG, SARCOS) or remain near uniform (Diamonds), never converging on a stable allocation (Figures~\ref{fig:kegg_frac_evolution}--\ref{fig:dia_frac_evolution}). This suggests that their feedback is too noisy to effectively discriminate between strategies in this setup.

These datasets also illustrate the failure mode that motivates adaptive strategy selection. CoreSet, which performed the worst on CIFAR-10 (Table~\ref{tab:clas-all-pure}), is among the strongest on all three regression setups (Table~\ref{tab:reg-all-pure}). Conversely, ProbCover is strongest on KEGG and SARCOS but weakest on Diamonds. No fixed strategy is safe across these setups, which emphasizes the need for adaptive selection.

\begin{table}[t]
\centering
\caption{Best-Set Overlap ($\uparrow$) and Worst-Set Overlap ($\uparrow$) across all
seven setups.
\textit{Chance} is the expected overlap under a uniform
allocation, $|\text{set}|/K$ (set cardinalities in Table~\ref{tab:recall-k}).}
\label{tab:recall-all}
\setlength{\tabcolsep}{4pt}
\footnotesize
\begin{tabular}{l cc cc cc cc}
\toprule
 & \multicolumn{2}{c}{\textit{Chance}} & \multicolumn{2}{c}{SCRiBLe}
 & \multicolumn{2}{c}{SelectAL} & \multicolumn{2}{c}{FractAL} \\
\cmidrule(lr){2-3}\cmidrule(lr){4-5}\cmidrule(lr){6-7}\cmidrule(lr){8-9}
Setup & \makecell{Best-Set\\Overlap} & \makecell{Worst-Set\\Overlap}
      & \makecell{Best-Set\\Overlap} & \makecell{Worst-Set\\Overlap}
      & \makecell{Best-Set\\Overlap} & \makecell{Worst-Set\\Overlap}
      & \makecell{Best-Set\\Overlap} & \makecell{Worst-Set\\Overlap} \\\midrule
CIFAR-10 (C1) & 0.33 & 0.33 & 0.34\se{0.03} & 0.35\se{0.05} & 0.46\se{0.02} & 0.41\se{0.03} & \textbf{0.79}\se{0.02} & \textbf{0.89}\se{0.02} \\
CIFAR-10 (C2) & 0.25 & 0.13 & 0.24\se{0.01} & 0.10\se{0.02} & 0.29\se{0.02} & 0.17\se{0.02} & \textbf{0.46}\se{0.01} & \textbf{0.61}\se{0.01} \\
KEGG          & 0.50 & 0.25 & 0.51\se{0.01} & 0.29\se{0.03} & 0.50\se{0.01} & 0.24\se{0.01} & \textbf{0.58}\se{0.01} & \textbf{0.31}\se{0.02} \\
SARCOS        & 0.50 & 0.75 & 0.51\se{0.01} & 0.75\se{0.01} & 0.53\se{0.02} & 0.77\se{0.01} & \textbf{0.54}\se{0.01} & \textbf{0.79}\se{0.01} \\
Diamonds      & 0.50 & 0.50 & \textbf{0.52}\se{0.02} & \textbf{0.52}\se{0.01} & 0.51\se{0.01} & 0.51\se{0.01} & \textbf{0.52}\se{0.02} & \textbf{0.52}\se{0.02} \\
BMDMs (B1)    & 0.25 & 0.50 & 0.26\se{0.02} & 0.50\se{0.01} & 0.59\se{0.03} & 0.56\se{0.01} & \textbf{0.70}\se{0.10} & \textbf{0.62}\se{0.01} \\
T Cells (B2)   & 0.25 & 0.50 & 0.20\se{0.03} & 0.50\se{0.01} & 0.25\se{0.01} & 0.49\se{0.00} & \textbf{0.57}\se{0.12} & \textbf{0.61}\se{0.05} \\
\bottomrule
\end{tabular}
\end{table}

\subsection{Design of High-Throughput Genetic Perturbation Screens}
\label{sec:bio_application}
Pooled genetic screens such as Perturb-seq, with a single-cell transcriptomics readout~\citep{dixit2016perturb}, and Optical Pooled Screens (OPS), with an imaging readout~\citep{feldman2019optical}, make it possible to alter the function of thousands of genes in a single experiment and measure their effects on cells. These data are key to training predictive models of cellular responses to perturbations across biological contexts, which is central to inferring gene regulatory circuits, identifying therapeutic targets, and engineering cellular phenotypes~\citep{rood2024toward, roohani2024predicting}. A recurring difficulty is that perturbation effects measured under one set of conditions (e.g., a different modality, timepoint, or cellular context) may not generalize well to another~\citep{dibaeinia2026virtual}. 

This motivates an experimental design problem where one chooses which perturbations to profile in order to maximize the predictive accuracy of the resulting model. This is a well-defined active learning task~\citep{huang2024sequential}. Because each round of wet-lab experimentation takes weeks but provides measurements for hundreds of perturbations in parallel, this setting also matches FractAL's operating regime~\citep{demeo2025active}. We evaluate FractAL on two practical instances of perturbation prediction, each testing a different aspect of generalization: cross-modal prediction from imaging to transcriptomic data (B1), and cross-temporal prediction within a single cell type (B2).

\paragraph{Datasets.} For the cross-modal task (B1), we rely on matched Perturb-seq and OPS datasets of primary mouse bone-marrow-derived myeloid cells (BMDMs) under immune stimulation~\citep{kudo2026scalable}, retaining the 514 perturbations with significant effects in both modalities. The goal is to predict each perturbation's effect on gene expression (quantified as the log-fold change between perturbed and control cells;~\citealp{boyeau2023empirical}) from embeddings of the imaging data. For the cross-temporal task (B2), we rely on a genome-scale Perturb-seq screen of primary human CD4+ T cells profiled at multiple time points after T cell receptor stimulation~\citep{zhu2025genome}, retaining the 709 perturbations with significant effects at both time points. The goal is to predict the late-stage transcriptional response (48 hours post-stimulation) of each perturbation from observations of its early-stage response (8 hours post-stimulation), testing the model's ability to extrapolate across the temporal dynamics of T cell activation. Data processing details for both tasks appear in Appendix~\ref{app:pert-pair}.

\begin{wraptable}{R}{0.43\textwidth}
% \vspace{-0.4cm}
\caption{AUPRC ($\uparrow$) for perturbation prediction tasks.
}
% \vspace{-0.3cm}
\footnotesize
  \label{pp-results}
  \centering
  \tabcolsep=0.11cm
\begin{tabular}{lcc}
    \toprule
    Method  & (B1): BMDMs  & (B2): T cells\\[0.1cm]
    \midrule
    \multicolumn{3}{l}{\textit{Pure strategies (best/worst in hindsight)}}\\
    \shortstack[l]{\textit{Best pure}\\ \phantom{\tiny x}}
      & \shortstack{\textit{0.4377}\se{0.0005}\\ \tiny \textit{CoreSet}}
      & \shortstack{\textit{0.3696}\se{0.0002}\\ \tiny \textit{CoreSet}}\\[2pt]
    \shortstack[l]{\textit{Worst pure}\\ \phantom{\tiny x}}
      & \shortstack{\textit{0.4165}\se{0.0011}\\ \tiny \textit{Random}}
      & \shortstack{\textit{0.3403}\se{0.0045}\\ \tiny \textit{Random}}\\
    \midrule
    \multicolumn{3}{l}{\textit{Non-adaptive controls}}\\
    Random      & 0.4165\se{0.0011} & 0.3403\se{0.0045}\\
    Equal Split & 0.4279\se{0.0006} & 0.3442\se{0.0065}\\
    \midrule
    \multicolumn{3}{l}{\textit{Adaptive strategy selection methods}}\\
    ALBL        & 0.4289\se{0.0018} & 0.3548\se{0.0021}\\
    UCB         & 0.4323\se{0.0013} & 0.3590\se{0.0015}\\
    SCRiBLe     & 0.4299\se{0.0006} & 0.3557\se{0.0006}\\
    SelectAL    & 0.4346\se{0.0010} & 0.3579\se{0.0012}\\
    AutoAL      & 0.4336\se{0.0005} & 0.3603\se{0.0008}\\
    FractAL     & \textbf{0.4363}\se{0.0006} & \textbf{0.3642}\se{0.0013}\\
    \bottomrule
\end{tabular}
\end{wraptable}

\paragraph{Experimental setup.}
Across both tasks, we use an initial labeled set of size $|\mathcal{D}_{\text{init}}| = 35$, batch size $B = 35$, and a validation set of size $|\mathcal{D}_{\text{val}}| = 40$. The held-out test sets are of size $|\mathcal{D}_{\text{test}}| = 103$ (B1) and $141$ (B2), corresponding to roughly 20\% of each dataset. Batch sizes are small because paired measurements of this kind are recent, and the resulting datasets are still limited in size. We run active learning for $T=6$ rounds (measuring roughly 40--50\% of the total pool) using a single-layer MLP as the surrogate model. Classical regression metrics such as MSE are known to be misaligned with the downstream task of identifying which genes respond to a perturbation~\citep{zhu2025auprc, littman2025gene}. We therefore report the area under the precision-recall curve (AUPRC) for detecting differentially expressed genes. The portfolio comprises Random, CoreSet, ProbCover, and TypiClust.

\paragraph{Results.} 
FractAL achieves the highest test AUPRC among adaptive methods on both perturbation prediction tasks (Table~\ref{pp-results}). It also leads every allocation diagnostic (Table~\ref{tab:recall-all}). FractAL steadily concentrates budget on the strongest strategy (here, CoreSet) on both tasks, whereas the bandit baselines (UCB, SCRiBLe, ALBL) never settle on a stable allocation (Figures~\ref{fig:tcell_frac_evolution_main},~\ref{fig:pp_frac_evolution}~and~\ref{fig:tcell_frac_evolution}). This behavior is consistent with the bandit feedback signal being too sparse to confidently commit budget within the few rounds available in this setting. FractAL also downweights the weakest strategy (Random), whereas the closest baseline (SelectAL) still retains substantial allocation to it. On B2, FractAL is the only method to significantly outperform chance on Worst-Set Overlap (one-sample t-test against chance, $p < 0.05$). This suggests the influence-based reward is a more reliable signal for pruning than SelectAL's deletion-based test.
    
\begin{figure}[t]
  \centering
  \begin{subfigure}[t]{0.24\textwidth}
    \centering
    \includegraphics[width=\textwidth]{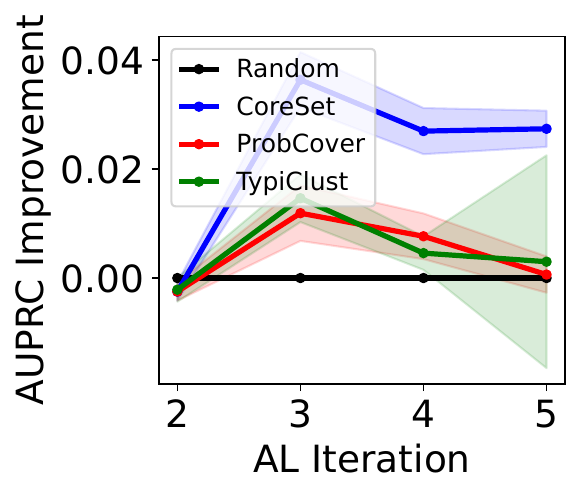}
    \caption{Test performance of single strategies}
  \end{subfigure}
  \hfill
  \begin{subfigure}[t]{0.24\textwidth}
    \centering
    \includegraphics[width=\textwidth]{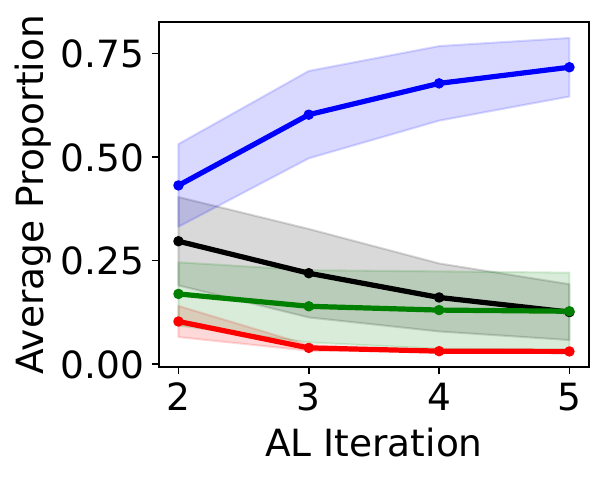}
    \caption{FractAL allocations}
  \end{subfigure}
  \begin{subfigure}[t]{0.24\textwidth}
    \centering
    \includegraphics[width=\textwidth]{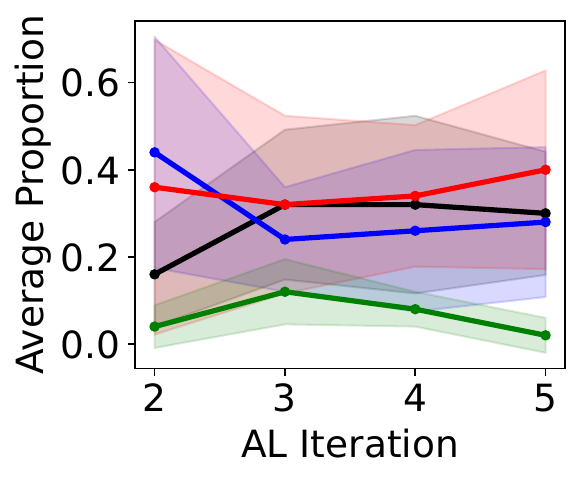}
    \caption{SelectAL allocations \centering}
  \end{subfigure}
  \begin{subfigure}[t]{0.24\textwidth}
    \centering
    \includegraphics[width=\textwidth]{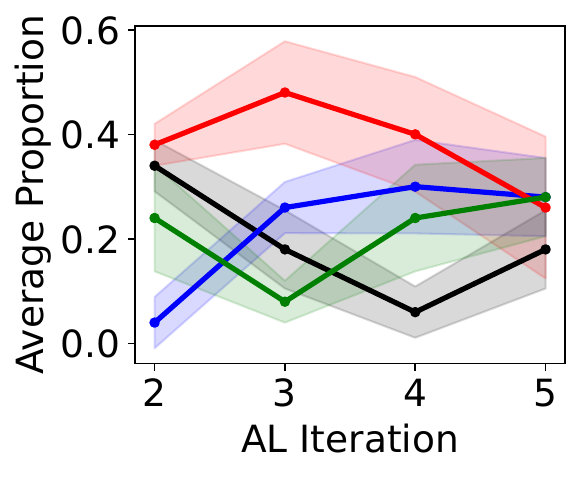}
    \caption{ALBL allocations \centering}
  \end{subfigure}
  \caption{Setup (B2): T cell perturbation prediction. (a) Test AUPRC of the four pure strategies. (b--d) Per-round average budget allocation under FractAL, SelectAL and ALBL. Solid lines denote the mean and shaded bands indicate one standard deviation.
  }
  \label{fig:tcell_frac_evolution_main}
\end{figure}

\subsection{Aggregate Results, Ablations, and Runtime}

\paragraph{Aggregate performance.} Metrics within a single experiment separate methods only by narrow margins. We therefore summarize performance across all seven setups, normalizing the metric so that $0$ corresponds to matching random performance and $1$ to matching the best strategy in the portfolio for that setup. FractAL attains the highest mean gain ($0.689$) and is the only method with a positive worst-case gain ($0.222$): every other method falls below Random on at least one setup. It is also the top-ranked method on average (mean rank $1.14$ of $7$, median $1$), ahead of the next closest method, SelectAL ($3.14$, median $3$). FractAL's advantage reflects consistent competitiveness across every task we evaluate
(Appendix~\ref{app:perf-setups}).

\paragraph{Ablation Studies.} We conduct a series of ablation studies to isolate the contribution of each component of FractAL to its performance across the three task domains (Appendix~\ref{app-ablation}). FractAL's performance is robust to the choice of the smoothing parameter $\alpha$. Mirror descent outperforms an additive update with clipping and renormalization. Soft fractional allocation improves over hard allocation (i.e., picking one strategy per round) on five out of seven setups while tying on the remaining two. Finally, the influence-based reward matches the performance of a retraining-based signal, and the two correlate well (Spearman's $\rho=0.7$). 

\paragraph{Runtime Analysis.}
\begin{wrapfigure}{r}{0.45\textwidth}
    \centering
    \includegraphics[width=0.9\linewidth]{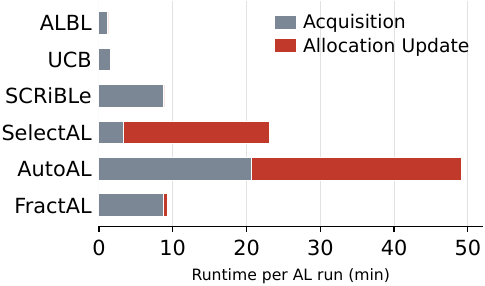}
    \caption{Runtime for an AL run on (C2).}
    \label{fig:runtime}
\end{wrapfigure}
We report the runtime for an entire active learning run on the CIFAR-10 dataset (C2, $K=8$) on a single NVIDIA L40S GPU (Figure~\ref{fig:runtime}, exact numbers in Appendix~\ref{app:runtime}). For FractAL and each of the baselines, we evaluate the time spent querying strategies in the portfolio and assembling the batch (Acquisition) versus the time spent on computing the reward signal and the allocation (Allocation Update). 
FractAL and SCRiBLe query every strategy each round, so their acquisition time is longer than that of the methods that commit to one at every round (ALBL, UCB, SelectAL).
Allocation-update time varies much more across methods: 0.5 min for FractAL, compared with 19.8 min for SelectAL and 28.5 min for AutoAL, which represents at least a 40-fold reduction compared to methods that involve additional model training. The runtime advantage of our method grows with both the portfolio size $K$ and the size of the surrogate model, since FractAL computes a single inverse-Hessian-vector product per round regardless of $K$, while retraining-based attribution scales linearly in both. The bandit baselines have a faster allocation update than FractAL, but they perform poorly in this study (Table~\ref{tab:recall-all}). 
Overall, FractAL completes an entire AL run $2.5\times$ faster than the second-best-performing method in the study (SelectAL), and $5\times$ faster than AutoAL.

\section{Discussion} 
\label{sec:disc}
We addressed the problem of strategy selection in active learning, where strategies perform unpredictably across tasks, datasets and model architectures. We proposed FractAL, a fractional allocation framework that acquires data from multiple strategies in parallel and updates their budget shares at each round using influence-based reward attribution. Aggregated across the seven setups, FractAL is the only method whose worst-case performance exceeds that of random sampling; every baseline falls below it on at least one setup (Appendix~\ref{app:perf-setups}). Across classification, regression, and high-throughput genetic perturbation screens, FractAL consistently up-weights the strongest strategy, and down-weights the ones with poor performance. In contrast, bandit-based baselines oscillate between strategies, and retraining-based methods underperform in this study. The allocation advantage is clearest in the iteration-scarce, large-batch regime that motivates the framework, where each round of feedback must carry substantial signal to be useful.

Several directions warrant further investigation. Influence functions are known to be fragile in deep networks~\citep{basu2020influence}, and alternative data valuation frameworks --- Shapley values~\citep{ghorbani2019data}, gradient similarity~\citep{pruthi2020estimating}, or Banzhaf values~\citep{wang2023data} --- may stabilize the reward signal under more complex surrogates. Beyond strategy selection, the fractional allocation framework extends to allocate budget across competing \emph{priors} (e.g., protein--protein interaction networks~\citep{szklarczyk2025string} or gene ontology annotations~\citep{aleksander2025gene}). Finally, we also see this problem as a promising direction for active experimental design more broadly.

% \newpage
\bibliography{iclr2027_conference}

@inproceedings{hsu2015active,
  title={Active Learning by Learning},
  author={Hsu, Wei-Ning and Lin, Hsuan-Tien},
  booktitle={AAAI Conference on Artificial Intelligence},
  pages={2659--2665},
  year={2015}
}

@InProceedings{wang2024autoal,
  title = 	 {{A}uto{AL}: Automated Active Learning with Differentiable Query Strategy Search},
  author =       {Wang, Yifeng and Zhan, Xueying and Huang, Siyu},
  booktitle = 	 {International Conference on Machine Learning},
  year = 	 {2025},
}

@inproceedings{hacohen2024select,
  title={How to Select Which Active Learning Strategy is Best Suited for Your Specific Problem and Budget},
  author={Hacohen, Guy and Weinshall, Daphna},
  booktitle={Advances in Neural Information Processing Systems},
  volume={36},
  year={2023}
}

@article{yang2025active,
  title={Active learning-assisted directed evolution},
  author={Yang, Jason and Lal, Ravi G and Bowden, James C and Astudillo, Raul and Hameedi, Mikhail A and Kaur, Sukhvinder and Hill, Matthew and Yue, Yisong and Arnold, Frances H},
  journal={Nature Communications},
  volume={16},
  number={1},
  pages={714},
  year={2025},
  publisher={Nature Publishing Group UK London}
}

@article{frey2025lab,
  title={Lab-in-the-loop therapeutic antibody design with deep learning},
  author={Frey, Nathan C and H{\"o}tzel, Isidro and Stanton, Samuel D and Kelly, Ryan and Alberstein, Robert G and Makowski, Emily K and Martinkus, Karolis and Berenberg, Daniel and Bevers III, Jack and Bryson, Tyler and others},
  journal={bioRxiv},
  pages={2025--02},
  year={2025},
  publisher={Cold Spring Harbor Laboratory}
}

@article{dixit2016perturb,
  title={{Perturb-Seq}: dissecting molecular circuits with scalable single-cell {RNA} profiling of pooled genetic screens},
  author={Dixit, Atray and Parnas, Oren and Li, Biyu and Chen, Jenny and Fulco, Charles P and Jerby-Arnon, Livnat and Marjanovic, Nemanja D and Dionne, Danielle and Burks, Tyler and Raychowdhury, Raktima and others},
  journal={Cell},
  volume={167},
  number={7},
  pages={1853--1866},
  year={2016},
}

@article{littman2025gene,
  title={Gene-embedding-based prediction and functional evaluation of perturbation expression responses with presage},
  author={Littman, Russell and Levine, Jacob and Maleki, Sepideh and Lee, Yongju and Ermakov, Vladimir and Qiu, Lin and Wu, Alexander and Huang, Kexin and Lopez, Romain and Scalia, Gabriele and others},
  journal={bioRxiv},
  pages={2025--06},
  year={2025},
  publisher={Cold Spring Harbor Laboratory}
}

@article{zhu2025auprc,
  title={{AUPRC}: a metric for evaluating the performance of in-silico perturbation methods in identifying differentially expressed genes},
  author={Zhu, Hongxu and Asiaee, Amir and Azinfar, Leila and Li, Jun and Liang, Han and Irajizad, Ehsan and Do, Kim-Anh and Long, James P},
  journal={Briefings in bioinformatics},
  volume={26},
  number={5},
  pages={bbaf426},
  year={2025},
  publisher={Oxford University Press}
}

@article{carpenter2006cellprofiler,
  title={{CellProfiler}: image analysis software for identifying and quantifying cell phenotypes},
  author={Carpenter, Anne E and Jones, Thouis R and Lamprecht, Michael R and Clarke, Colin and Kang, In Han and Friman, Ola and Guertin, David A and Chang, Joo Han and Lindquist, Robert A and Moffat, Jason and others},
  journal={Genome Biology},
  volume={7},
  number={10},
  pages={R100},
  year={2006},
  publisher={Springer}
}

@article{boyeau2023empirical,
  title={An empirical {Bayes} method for differential expression analysis of single cells with deep generative models},
  author={Boyeau, Pierre and Regier, Jeffrey and Gayoso, Adam and Jordan, Michael I and Lopez, Romain and Yosef, Nir},
  journal={Proceedings of the National Academy of Sciences},
  volume={120},
  number={21},
  pages={e2209124120},
  year={2023},
  publisher={National Academy of Sciences}
}

@article{dibaeinia2026virtual,
  title={Virtual Cells Need Context, Not Just Scale},
  author={Dibaeinia, Payam and Babu, Sudarshan and Knudson, Mei and ElSheikh, Ali and Wen, Yibo and Liu, Han and Perera, Jason and Khan, Aly A},
  journal={International Conference on Machine Learning},
  year={2026},
}

@article{feldman2019optical,
  title={Optical pooled screens in human cells},
  author={Feldman, David and Singh, Avtar and Schmid-Burgk, Jonathan L and Carlson, Rebecca J and Mezger, Anja and Garrity, Anthony J and Zhang, Feng and Blainey, Paul C},
  journal={Cell},
  volume={179},
  number={3},
  pages={787--799},
  year={2019},
  publisher={Elsevier}
}

@misc{naeem2011kegg,
  author = {Naeem, M. and Asghar, S.},
  title = {{KEGG} Metabolic Reaction Network (Undirected)},
  year = {2011},
  publisher = {UCI Machine Learning Repository},
  doi = {10.24432/C5G609},
}

@article{vanschoren2014openml,
  title={{OpenML}: networked science in machine learning},
  author={Vanschoren, Joaquin and Van Rijn, Jan N and Bischl, Bernd and Torgo, Luis},
  journal={ACM SIGKDD Explorations Newsletter},
  volume={15},
  number={2},
  pages={49--60},
  year={2014},
}

@article{houlsby2011bayesian,
  title={Bayesian active learning for classification and preference learning},
  author={Houlsby, Neil and Husz{\'a}r, Ferenc and Ghahramani, Zoubin and Lengyel, M{\'a}t{\'e}},
  journal={arXiv},
  year={2011}
}

@techreport{settles2009active,
  title={Active Learning Literature Survey},
  author={Settles, Burr},
  institution={University of Wisconsin-Madison},
  type={Computer Sciences Technical Report},
  number={1648},
  year={2009}
}

@inproceedings{sener2017active,
  title={Active Learning for Convolutional Neural Networks: A Core-Set Approach},
  author={Sener, Ozan and Savarese, Silvio},
  booktitle={International Conference on Learning Representations},
  year={2018}
}

@article{baram2004online,
  title={Online choice of active learning algorithms},
  author={Baram, Yoram and Yaniv, Ran El and Luz, Kobi},
  journal={Journal of Machine Learning Research},
  volume={5},
  pages={255--291},
  year={2004}
}

@inproceedings{beygelzimer2011contextual,
  title={Contextual bandit algorithms with supervised learning guarantees},
  author={Beygelzimer, Alina and Langford, John and Li, Lihong and Reyzin, Lev and Schapire, Robert},
  booktitle={International Conference on Artificial Intelligence and Statistics},
  pages={19--26},
  year={2011},
}

@inproceedings{pang2018dynamic,
  title={Dynamic ensemble active learning: A non-stationary bandit with expert advice},
  author={Pang, Kunkun and Dong, Mingzhi and Wu, Yang and Hospedales, Timothy M},
  booktitle={International Conference on Pattern Recognition},
  pages={2269--2276},
  year={2018},
}

@inproceedings{koh2017understanding,
  title={Understanding black-box predictions via influence functions},
  author={Koh, Pang Wei and Liang, Percy},
  booktitle={International Conference on Machine Learning},
  pages={1885--1894},
  year={2017},
}

@article{xu2019understanding,
  title={Understanding goal-oriented active learning via influence functions},
  author={Xu, Minjie and Kazantsev, Gary},
  journal={Workshop on ML with Guarantees, Advances in Neural Information Processing Systems},
  year={2019}
}

@article{hampel1974influence,
  title={The influence curve and its role in robust estimation},
  author={Hampel, Frank R},
  journal={Journal of the American Statistical Association},
  volume={69},
  number={346},
  pages={383--393},
  year={1974},
  publisher={Taylor \& Francis}
}

@inproceedings{martens2010deep,
  title={Deep learning via {Hessian-free} optimization.},
  author={Martens, James},
  booktitle={International Conference on Machine Learning},
  volume={27},
  pages={735--742},
  year={2010}
}

@article{xia2023reliable,
  title={Reliable active learning via influence functions},
  author={Xia, Meng and Henao, Ricardo},
  journal={Transactions on Machine Learning Research},
  volume={2023},
  pages={1364},
  year={2023}
}

@inproceedings{liu2021influence,
  title={Influence selection for active learning},
  author={Liu, Zhuoming and Ding, Hao and Zhong, Huaping and Li, Weijia and Dai, Jifeng and He, Conghui},
  booktitle={IEEE/CVF International Conference on Computer Vision},
  pages={9274--9283},
  year={2021}
}

@article{ash2019deep,
  title={Deep batch active learning by diverse, uncertain gradient lower bounds},
  author={Ash, Jordan T and Zhang, Chicheng and Krishnamurthy, Akshay and Langford, John and Agarwal, Alekh},
  journal={International Conference on Learning Representations},
  year={2020}
}

@article{yehuda2022active,
  title={Active learning through a covering lens},
  author={Yehuda, Ofer and Dekel, Avihu and Hacohen, Guy and Weinshall, Daphna},
  journal={Advances in Neural Information Processing Systems},
  volume={35},
  pages={22354--22367},
  year={2022}
}

@article{hacohen2022active,
  title={Active learning on a budget: Opposite strategies suit high and low budgets},
  author={Hacohen, Guy and Dekel, Avihu and Weinshall, Daphna},
  journal={International Conference on Machine Learning},
  year={2022}
}

@article{krizhevsky2009learning,
  title={Learning multiple layers of features from tiny images},
  author={Krizhevsky, Alex and Hinton, Geoffrey and others},
  journal={Tech Report},
  year={2009},
  publisher={Toronto, ON, Canada}
}

@article{holzmuller2023framework,
  title={A framework and benchmark for deep batch active learning for regression},
  author={Holzm{\"u}ller, David and Zaverkin, Viktor and K{\"a}stner, Johannes and Steinwart, Ingo},
  journal={Journal of Machine Learning Research},
  volume={24},
  number={164},
  pages={1--81},
  year={2023}
}

@inproceedings{he2016deep,
  title={Deep residual learning for image recognition},
  author={He, Kaiming and Zhang, Xiangyu and Ren, Shaoqing and Sun, Jian},
  booktitle={IEEE Conference on Computer Vision and Pattern Recognition},
  pages={770--778},
  year={2016}
}

@inproceedings{munjal2022towards,
  title={Towards robust and reproducible active learning using neural networks},
  author={Munjal, Prateek and Hayat, Nasir and Hayat, Munawar and Sourati, Jamshid and Khan, Shadab},
  booktitle={Proceedings of the IEEE/CVF Conference on Computer Vision and Pattern Recognition},
  pages={223--232},
  year={2022}
}

@inproceedings{abernethy2009competing,
  title={Competing in the dark: An efficient algorithm for bandit linear optimization},
  author={Abernethy, Jacob D and Hazan, Elad and Rakhlin, Alexander},
  booktitle={Conference on Learning Theory},
  year={2009}
}

@inproceedings{vijayakumar2000locally,
  title={Locally weighted projection regression: An {O}(n) algorithm for incremental real time learning in high dimensional space},
  author={Vijayakumar, Sethu and Schaal, Stefan},
  booktitle={International Conference on Machine Learning},
  volume={1},
  pages={288--293},
  year={2000},
}

@article{zhu2025genome,
  title={Genome-scale perturb-seq in primary human {CD4+ T} cells maps context-specific regulators of {T} cell programs and human immune traits},
  author={Zhu, Ronghui and Dann, Emma and Yan, Jun and Retana, Justine Reyes and Goto, Ryunosuke and Guitche, Reese C and Petersen, Lillian K and Ota, Mineto and Pritchard, Jonathan K and Marson, Alexander},
  journal={Cell},
  year={2026},
}

@article{smith2018less,
  title={Less is more: Sampling chemical space with active learning},
  author={Smith, Justin S and Nebgen, Ben and Lubbers, Nicholas and Isayev, Olexandr and Roitberg, Adrian E},
  journal={The Journal of Chemical Physics},
  volume={148},
  number={24},
  year={2018},
  publisher={AIP Publishing}
}

@inproceedings{huang2024sequential,
  title={Sequential optimal experimental design of perturbation screens guided by multi-modal priors},
  author={Huang, Kexin and Lopez, Romain and H{\"u}tter, Jan-Christian and Kudo, Takamasa and Rios, Antonio and Regev, Aviv},
  booktitle={International Conference on Research in Computational Molecular Biology},
  pages={17--37},
  year={2024},
}

@article{luth2023navigating,
  title={Navigating the pitfalls of active learning evaluation: A systematic framework for meaningful performance assessment},
  author={L{\"u}th, Carsten and Bungert, Till and Klein, Lukas and Jaeger, Paul},
  journal={Advances in Neural Information Processing Systems},
  volume={36},
  pages={9789--9836},
  year={2023}
}

@article{werner2024cross,
  title={A cross-domain benchmark for active learning},
  author={Werner, Thorben and Burchert, Johannes and Stubbemann, Maximilian and Schmidt-Thieme, Lars},
  journal={Advances in Neural Information Processing Systems},
  volume={37},
  pages={62875--62911},
  year={2024}
}

@inproceedings{wang2023data,
  title={Data banzhaf: A robust data valuation framework for machine learning},
  author={Wang, Jiachen T and Jia, Ruoxi},
  booktitle={International Conference on Artificial Intelligence and Statistics},
  pages={6388--6421},
  year={2023},
}

@article{pruthi2020estimating,
  title={Estimating training data influence by tracing gradient descent},
  author={Pruthi, Garima and Liu, Frederick and Kale, Satyen and Sundararajan, Mukund},
  journal={Advances in Neural Information Processing Systems},
  volume={33},
  pages={19920--19930},
  year={2020}
}

@inproceedings{ghorbani2019data,
  title={Data shapley: Equitable valuation of data for machine learning},
  author={Ghorbani, Amirata and Zou, James},
  booktitle={International Conference on Machine Learning},
  pages={2242--2251},
  year={2019},
}

@article{demeo2025active,
  title={Active learning framework leveraging transcriptomics identifies modulators of disease phenotypes},
  author={DeMeo, Benjamin and Nesbitt, Charlotte and Miller, Samuel A and Burkhardt, Daniel B and Lipchina, Inna and Fu, Doris and Holderrieth, Peter and Kim, David and Kolchenko, Sergey and Szalata, Artur and others},
  journal={Science},
  volume={390},
  number={6776},
  pages={8577},
  year={2025},
  publisher={American Association for the Advancement of Science}
}

@article{rood2024toward,
  title={Toward a foundation model of causal cell and tissue biology with a Perturbation Cell and Tissue Atlas},
  author={Rood, Jennifer E and Hupalowska, Anna and Regev, Aviv},
  journal={Cell},
  volume={187},
  number={17},
  pages={4520--4545},
  year={2024},
  publisher={Elsevier}
}

@article{tran2026rapid,
  title={Rapid directed evolution guided by protein language models and epistatic interactions},
  author={Tran, Vincent Q and Nemeth, Matthew and Bartie, Liam J and Chandrasekaran, Sita S and Fanton, Alison and Moon, Hyungseok C and Hie, Brian L and Konermann, Silvana and Hsu, Patrick D},
  journal={Science},
  pages={1820},
  year={2026},
  publisher={American Association for the Advancement of Science}
}

@article{roohani2024predicting,
  title={Predicting transcriptional outcomes of novel multigene perturbations with {GEARS}},
  author={Roohani, Yusuf and Huang, Kexin and Leskovec, Jure},
  journal={Nature Biotechnology},
  volume={42},
  number={6},
  pages={927--935},
  year={2024},
  publisher={Nature Publishing Group US New York}
}

@article{basu2020influence,
  title={Influence functions in deep learning are fragile},
  author={Basu, Samyadeep and Pope, Philip and Feizi, Soheil},
  journal={International Conference on Learning Representations},
  year={2021}
}

@article{chen2020simple,
    author = {Chen, Ting and Kornblith, Simon and Norouzi, Mohammad and Hinton, Geoffrey},
    year = {2020},
    title = {A simple framework for contrastive learning of visual representations},
    journal = {International Conference on Machine Learning},
}

@article{ren2021survey,
  title={A survey of deep active learning},
  author={Ren, Pengzhen and Xiao, Yun and Chang, Xiaojun and Huang, Po-Yao and Li, Zhihui and Gupta, Brij B and Chen, Xiaojiang and Wang, Xin},
  journal={ACM Computing Surveys},
  volume={54},
  number={9},
  pages={1--40},
  year={2021},
  publisher={ACM New York, NY}
}

@article{szklarczyk2025string,
  title={The {STRING} database in 2025: protein networks with directionality of regulation},
  author={Szklarczyk, Damian and Nastou, Katerina and Koutrouli, Mikaela and Kirsch, Rebecca and Mehryary, Farrokh and Hachilif, Radja and Hu, Dewei and Peluso, Matteo E and Huang, Qingyao and Fang, Tao and others},
  journal={Nucleic Acids Research},
  volume={53},
  number={D1},
  pages={D730--D737},
  year={2025},
  publisher={Oxford University Press}
}

@article{aleksander2025gene,
  title={The {Gene Ontology} knowledge-base in 2026},
  author={Aleksander, Suzi A and Balhoff, James P and Carbon, Seth and Cherry, J Michael and Ebert, Dustin and Feuermann, Marc and Gaudet, Pascale and Harris, Nomi L and Hill, David P and Kalita, Patrick and others},
  journal={Nucleic Acids Research},
  volume={54},
  number={D1},
  pages={gkaf1292},
  year={2025}
}

@article{weinberger2023isolating,
  title={Isolating salient variations of interest in single-cell data with {contrastiveVI}},
  author={Weinberger, Ethan and Lin, Chris and Lee, Su-In},
  journal={Nature Methods},
  volume={20},
  number={9},
  pages={1336--1345},
  year={2023},
  publisher={Nature Publishing Group US New York}
}

@article{kudo2026scalable,
  title={Scalable multimodal mapping of macrophage regulatory architecture by integrating optical and transcriptomic pooled screens},
  author={Kudo, Takamasa and Lopez, Romain and Meireles, Ana M and Rios, Antonio and Coelho, Paula and Chandrasekar, Vineethkrishna and Lam, Dylan C and Huetter, Jan-Christian and Ota, Mineto and Rozenblatt-Rosen, Orit and others},
  journal={bioRxiv},
  pages={2026--05},
  year={2026},
  publisher={Cold Spring Harbor Laboratory}
}
\bibliographystyle{iclr2027_conference}

\newpage

\subsection*{Acknowledgements and Disclosure of Funding}
We thank Takamasa Kudo for discussions about the myeloid cell screening dataset. This work was supported through the NYU IT High Performance Computing (HPC) resources, services, and staff expertise. Rushil Gupta was partially supported by a Google TPU Research \& Education Award. 

Disclosures: Romain Lopez has been an employee of Genentech since August 31st, 2021.

\subsection*{Code Availability Statement}
The code to reproduce the figures and tables, as well as to run the model can be found at \href{https://github.com/biologicalML/FractAL}{https://github.com/biologicalML/FractAL}.

\subsection*{Data Availability Statement}
All the classification datasets (CIFAR-10)\footnote{Dataset Source: \url{https://www.cs.toronto.edu/~kriz/cifar.html}}, standard regression datasets (KEGG, SARCOS and Diamonds)\footnote{KEGG: \url{https://archive.ics.uci.edu/ml/datasets/KEGG+Metabolic+Reaction+Network+\%28Undirected\%29}}\footnote{SARCOS Source: \url{http://www.gaussianprocess.org/gpml/data/}}\footnote{Diamonds Source: \url{https://www.openml.org/d/42225}} and the CD4+ T cell genome-wide Perturb-seq data\footnote{Dataset Source: \url{https://virtualcellmodels.cziscience.com/dataset/genome-scale-tcell-perturb-seq}} are publicly accessible. The OPS images and the corresponding Perturb-Seq data for the BMDMs experiment~\citep{kudo2026scalable} are currently undergoing publication as a
separate manuscript. They will become publicly available upon acceptance.

%%%%%%%%%%%%%%%%%%%%%%%%%%%%%%%%%%%%%%%%%%%%%%%%%%%%%%%%%%%%
% \newpage
\appendix
\section*{Appendices}
The following list gives an overview of the structure of the Appendix:
\begin{itemize}
\item Appendix~\ref{app:proof}: derivation of the strategy-wise influence score.
\item Appendix~\ref{app:fractal_details}: practical estimation of influence scores and construction of the composite batch.
\item Appendix~\ref{app:strat-pool}: the acquisition strategies in our portfolio.
\item Appendix~\ref{app:baselines}: details and pseudocode for each strategy selection baseline.
\item Appendix~\ref{app:precision_k}: definition of the allocation diagnostics.
\item Appendix~\ref{app:exp-setups}: datasets, surrogate models, and hyperparameters for each setup.
\item Appendix~\ref{app:perf-setups}: aggregate gain and rank of each method across the seven setups.
\item Appendix~\ref{app:complete-tables}: per-setup results with the pure strategies added.
\item Appendix~\ref{app-ablation}: ablations on the smoothing parameter, allocation update, reward attribution, and soft allocation.
\item Appendix~\ref{app:runtime}: wall-clock runtime by phase for each method.
\item Appendix~\ref{app:more-plots}: budget allocation dynamics for all methods and setups.
\end{itemize}

\clearpage

\section{Proof of Proposition~\ref{lem:strategy-influence}}
\label{app:proof}
\strategyinfluence*

\paragraph{Assumptions.} These assumptions are standard in the influence-function literature~\citep{koh2017understanding}. They do not hold exactly for deep networks trained by stochastic gradient descent using early-stopping. We describe those, as well as practical workarounds in Appendix~\ref{app:fractal_details}.

\begin{proof}
We adapt the standard influence-function derivation~\citep{koh2017understanding, xu2019understanding} to our portfolio setting, where the perturbation vector $\boldsymbol{\epsilon} \in \mathbb{R}^K$ has one component per strategy rather than one per training point. The proof proceeds in two steps. 
First, we derive the parameter sensitivity $\frac{d\hat{\theta}^t_{\boldsymbol{\epsilon}}}{d\boldsymbol{\epsilon}}$ at $\boldsymbol{\epsilon} = \mathbf{0}$ via the implicit function theorem. 
Second, we apply the chain rule to obtain the closed form for $\mathcal{I}_k(\hat{\theta}^t)$.

\paragraph{Step 1: Parameter Sensitivity}
We first recall the perturbed risk from Eq.~\ref{eq:perturb_risk}: for a perturbation vector $\boldsymbol{\epsilon} \in \mathbb{R}^K$,
\begin{equation}
    \mathcal{F}_t(\boldsymbol{\epsilon}, \theta) = \mathcal{R}(\mathcal{D}^t_\text{train}; \theta) + \sum_{k=1}^K\frac{\epsilon_k}{n^t_k} \sum_{(x, y) \in \mathcal{B}^t_k} \mathcal{L}\left((x, y); \theta\right).
\end{equation}
The perturbed parameter $\hat{\theta}^t_{\boldsymbol{\epsilon}}$ is its minimizer:
\begin{equation}
    \hat{\theta}^t_{\boldsymbol{\epsilon}} := \arg\min_\theta \mathcal{F}_t(\boldsymbol{\epsilon}, \theta),
\end{equation}
with $\hat{\theta}^t_{\mathbf{0}} = \hat{\theta}^t$. 

By optimality, it satisfies the first-order condition:
\begin{equation}
\label{eq:opt-cond}
    \nabla_\theta \mathcal{R}\left(\mathcal{D}^t_\text{train}; \hat{\theta}^t_{\boldsymbol{\epsilon}}\right) + \sum_{k=1}^K \frac{\epsilon_k}{n^t_k} \sum_{(x, y) \in \mathcal{B}^t_k} \nabla_\theta \mathcal{L}\left((x, y); \hat{\theta}^t_{\boldsymbol{\epsilon}}\right) = 0.
\end{equation}

Let $\Delta_{\boldsymbol{\epsilon}} := \hat{\theta}^t_{\boldsymbol{\epsilon}} - \hat{\theta}^t$. As $\boldsymbol{\epsilon} \to \mathbf{0}$, we have $\Delta_{\boldsymbol{\epsilon}} \to \mathbf{0}$.

Since $\mathcal{L}$ is twice continuously differentiable and
$H_{\hat{\theta}^t}$ is positive definite, the implicit function theorem
applied to Eq.~\ref{eq:opt-cond} implies that, for $\boldsymbol{\epsilon}$ near $\mathbf{0}$,
the minimizer $\hat{\theta}^t_{\boldsymbol{\epsilon}}$ near $\hat{\theta}^t$
is a continuously differentiable function of $\boldsymbol{\epsilon}$. In
particular, $\Delta_{\boldsymbol{\epsilon}} = O(\|\boldsymbol{\epsilon}\|)$
as $\boldsymbol{\epsilon} \to \mathbf{0}$. Taylor-expanding Eq.~\ref{eq:opt-cond} around $\hat{\theta}^t$ and discarding terms of order $\boldsymbol{\epsilon} \Delta_{\boldsymbol{\epsilon}}$ and higher:
\begin{align}
    0 = \, & \nabla_\theta \mathcal{R}(\mathcal{D}^t_\text{train}; \hat{\theta}^t) + \nabla^2_\theta \mathcal{R}(\mathcal{D}^t_\text{train}; \hat{\theta}^t) \, \Delta_{\boldsymbol{\epsilon}} + \sum_{k=1}^K \frac{\epsilon_k}{n^t_k} \sum_{(x, y) \in \mathcal{B}^t_k} \nabla_\theta \mathcal{L}\left((x, y); \hat{\theta}^t\right) + O(\|\boldsymbol{\epsilon}\|^2).
\end{align}
Since $\hat{\theta}^t$ minimizes $\mathcal{R}(\mathcal{D}^t_\text{train}; \cdot)$, the first term vanishes: $\nabla_\theta \mathcal{R}(\mathcal{D}^t_\text{train}; \hat{\theta}^t) = 0$. Recognizing $\nabla^2_\theta \mathcal{R}(\mathcal{D}^t_\text{train}; \hat{\theta}^t) = H_{\hat{\theta}^t}$ and using the regularity assumption to invert it, we solve for $\Delta_{\boldsymbol{\epsilon}}$:
\begin{equation}
\label{eq:delta-eps}
    \Delta_{\boldsymbol{\epsilon}} = -H_{\hat{\theta}^t}^{-1} \left[\sum_{k=1}^K \frac{\epsilon_k}{n^t_k} \sum_{(x, y) \in \mathcal{B}^t_k} \nabla_\theta \mathcal{L}\left((x, y); \hat{\theta}^t\right) 
    + O(\|\boldsymbol{\epsilon}\|^2)\right]
\end{equation}
This expresses the parameter shift $\Delta_{\boldsymbol{\epsilon}}$ as a function of $\boldsymbol{\epsilon}$, with each $\epsilon_k$ scaling its strategy's gradient contribution.

\paragraph{Step 2: Chain Rule for the Influence Vector}

By definition, the $k$-th component of the influence vector is the partial derivative of the validation risk at the perturbed parameter, evaluated at $\boldsymbol{\epsilon} = \mathbf{0}$:
\begin{equation}
    \mathcal{I}_k(\hat{\theta}^t) = \frac{\partial}{\partial \epsilon_k} \mathcal{R}\left(\mathcal{D}_\text{val}; \hat{\theta}^t_{\boldsymbol{\epsilon}}\right) \bigg|_{\boldsymbol{\epsilon} = \mathbf{0}}.
\end{equation}
Applying the chain rule:
\begin{equation}
    \mathcal{I}_k(\hat{\theta}^t) = \nabla_\theta \mathcal{R}(\mathcal{D}_\text{val}; \hat{\theta}^t)^\top \frac{\partial \hat{\theta}^t_{\boldsymbol{\epsilon}}}{\partial \epsilon_k}\bigg|_{\boldsymbol{\epsilon} = \mathbf{0}}.
\end{equation}
The Jacobian column on the right is read off from Eq.~\ref{eq:delta-eps} by differentiating with respect to $\epsilon_k$ at $\boldsymbol{\epsilon} = \mathbf{0}$ (higher order terms of $\boldsymbol{\epsilon}$ vanish here at $\boldsymbol{\epsilon}=\mathbf{0}$):
\begin{equation}
    \frac{\partial \hat{\theta}^t_{\boldsymbol{\epsilon}}}{\partial \epsilon_k}\bigg|_{\boldsymbol{\epsilon} = \mathbf{0}} = -H_{\hat{\theta}^t}^{-1} \left( \frac{1}{n^t_k} \sum_{(x, y) \in \mathcal{B}^t_k} \nabla_\theta \mathcal{L}\left((x, y); \hat{\theta}^t\right) \right).
\end{equation}
Substituting into the chain rule expression:
\begin{equation}
    \mathcal{I}_k(\hat{\theta}^t) = -\nabla_\theta \mathcal{R}(\mathcal{D}_\text{val}; \hat{\theta}^t)^\top H_{\hat{\theta}^t}^{-1} \left( \frac{1}{n^t_k} \sum_{(x, y) \in \mathcal{B}^t_k} \nabla_\theta \mathcal{L}\left((x, y); \hat{\theta}^t\right) \right),
\end{equation}
which matches Eq.~\ref{eq:infl-cal-our}.
\end{proof}

\clearpage
\section{Implementation Details for FractAL}
\label{app:fractal_details}
\subsection{Practical Estimation of Influence Scores}
\label{app-prac-est}

Proposition~\ref{lem:strategy-influence} assumes an exact empirical risk minimizer and a positive-definite Hessian, and is stated for the full parameter vector. None of these holds exactly in our setting. We describe below how each departs from the idealized statement, the approximation we use in response, and empirical evidence that the resulting estimator remains reliable.

\paragraph{Optimality of the trained parameters.} The derivation uses the first-order optimality condition $\nabla_\theta R(\mathcal{D}^t_{\text{train}}; \hat\theta^t) = 0$ to eliminate the leading term of the Taylor expansion. In practice however, $\hat\theta^t$ is obtained by solving a stochastic optimization problem with a first-order method, using an early stopping criterion on the validation set. Therefore, this condition holds only approximately.

\paragraph{Positive definiteness of the Hessian.} The Hessian of the training risk at the trained parameters, $H_{\hat\theta^t} = \nabla^2_\theta R(\mathcal{D}^t_{\text{train}}; \hat\theta^t)$, need not be positive definite. For the multiclass softmax head used on CIFAR-10, it is provably singular since a common shift of all logits leaves the loss unchanged. We therefore add an explicit damping term before inversion in every setup, so
that the system we invert is $H_{\hat\theta^t} + \lambda_{\text{damp}} I$ with $\lambda_{\text{damp}} = 0.01$.

\paragraph{Last-layer restriction.} Exact computation of the influence scores may become intractable for surrogate models with a large number of parameters. We therefore restrict the Hessian to the last-layer parameters $\theta_L$~\citep{koh2017understanding}. In the case of ResNet-18, this corresponds to the final linear classification layer, and for the MLPs it is the final fully connected layer.

\paragraph{Reliability of the approximation.} We validate the resulting scores by comparing them against the retraining-based signal they approximate. Across rounds on CIFAR-10 (C1), the two rewards rank strategies consistently, with a Spearman correlation of $0.7$ - $0.8$ (Table~\ref{infl-retr-corr-results}). Additionally, substituting one for the other leaves downstream performance comparable on all seven setups (Table~\ref{infl-retr-results}). 

\subsection{Forming the Composite Batch}
\label{app:meth-detail}

At each round $t$, the composite batch $\mathcal{B}^t$ is formed by combining sub-batches independently proposed by each strategy in the portfolio. We describe here the procedure used to guarantee that the realized batch contains exactly $B$ unique points with each attributed to a single strategy (Algorithm~\ref{alg:FractAL}).

\paragraph{Budget rounding.} Since $Bp^t_k$ is generally not an integer, we allocate $n^t_k = \lfloor B p^t_k \rfloor$ points to each strategy $\mathcal{A}_k$ and assign the remainder $B - \sum_k \lfloor B p^t_k \rfloor$ to the strategy with the largest allocation, $\mathcal{A}_j$ with $j = \arg\max_k p^t_k$ (ties broken by strategy index). The realized sub-batch sizes therefore sum to exactly $B$.

\paragraph{Overlap resolution.} All strategies select candidates exclusively from the currently unlabeled pool $\mathcal{D}_{\text{pool}} \setminus \mathcal{D}^{t-1}_{\text{train}}$, so no previously acquired point can be re-selected. Within a round, however, two strategies may independently propose the same unlabeled point. We resolve this by having strategies select sequentially, in decreasing order of their allocated budget $p^t_k$ (ties broken by strategy index): each strategy's candidate set excludes points already claimed by a higher-allocated strategy earlier in the same round. This guarantees the realized sub-batches $\mathcal{B}^t_1, \ldots, \mathcal{B}^t_K$ are disjoint, and that every acquired point is attributed to exactly one strategy in the reward computation.

\paragraph{Empirical overlap rate.} We measured how often this overlap resolution rule is actually invoked by logging, prior to resolution, how frequently two strategies' independently-ranked candidate sets would have proposed the same point. On CIFAR-10 (C2, $K{=}8$, $B{=}500$), an average of 3 of the 500 points acquired per round were claimed by more than one strategy (0.6\% of the batch). The rule therefore affects well under 1\% of any batch, and the choice of tie-breaking order therefore has a negligible effect on the reported results.

\clearpage
\section{Active Learning Strategies}
\label{app:strat-pool}
Below we provide details about each of the data acquisition strategies used in our pool. The first four are suitable for all experiments:
\begin{itemize}
    \item \textbf{Random} is a strategy that randomly chooses a given number of points from the unlabeled pool
    \item \textbf{ProbCover} \citep{yehuda2022active} selects unlabeled points that maximize coverage of the data distribution under a probabilistic similarity graph, ensuring chosen samples ``cover'' as many other points as possible.
    \item \textbf{TypiClust} \citep{hacohen2022active} performs $k$-means clustering in an embedding space and samples typical points from the clusters ordered by their respective total size and labeled set size.
    \item \textbf{CoreSet} \citep{sener2017active} iteratively picks the most distant points from the already labeled points, given an embedding of all the unlabeled inputs. We use the Euclidean distance as the distance metric in our experiments.
\end{itemize}
The last four apply specifically to classification tasks in our work:
\begin{itemize}
    \item \textbf{BADGE.} BADGE \citep{ash2019deep} represents each unlabeled sample as the gradient of the cross-entropy loss it induces on the last layer of the surrogate model and then samples diverse points in the gradient embedding space.
    \item \textbf{LeastConf} strategy picks the samples that have the lowest maximum predicted probability.
    \item \textbf{MaxEnt} picks samples with highest entropy in the predicted probabilities.
    \item \textbf{MinMargin} picks the samples which have minimum difference between the probability of top two predicted classes.
\end{itemize}

\clearpage
\section{Strategy Selection Baselines}
\label{app:baselines}
\subsection{ALBL: Active Learning by Learning}

ALBL \citep{hsu2015active} utilizes the Exp4.P bandit solver to treat the strategies as experts giving advice over the unlabeled data points. At iteration $t$, the solver maintains weights $w^t_k$ and determines the selection probability of experts $s^t_k$ as:
\begin{equation}
    s^t_k = (1 - K p_{\min}) \frac{w^t_k}{\sum_{k'=1}^K w^t_{k'}} + p_{\min}
\end{equation}
where we set $p_{\min} = 1/B$. The strategy for the next iteration is then sampled with the distribution $s^t$. Let's assume the chosen strategy is $\mathcal{A}_k$. Thus, the allocation vector $\mathbf{p}^t \in \mathbb{R}^K$ is given by
\begin{equation}
p^t_{k'} =
\begin{cases}
1 & \text{if } k' = k, \\
0 & \text{otherwise},
\end{cases}
\qquad \forall k' \in [K],
\end{equation}
Conceptually, ALBL acquires the whole batch of points using $\mathcal{A}_k$ and receives a partial information reward $\mathbf{r}^t \in [0,1]^K$ based on validation performance, defined as follows:
\begin{equation}
\label{eq:albl-rew}
    r^t_k = \begin{cases}
        \text{Accuracy on } \mathcal{D}_{\text{val}} & \text{Classification Task}\\
        \frac{1}{1 + \text{Loss}(\mathcal{D}_{\text{val}})} & \text{Regression Task}\\
    \end{cases}
\end{equation}
and $r^t_{k'} = 0$ for all $k'\neq k$. This reward is then utilized to update strategy selection weights using an exponential rule:
\begin{equation}
    w^{t+1}_k = w^t_k \exp \left( \frac{p_{\min}}{2} (\hat{y}_k(t) + \hat{v}_k(t) p_{\min}) \right)
\end{equation}
where $\hat{y}_k(t)$ and $\hat{v}_k(t)$ are the reward and variance estimators. The whole algorithm is explained in Algorithm \ref{alg:albl-exp4p}.

\begin{algorithm}[h]
\caption{ALBL: Active Learning by Learning}
\label{alg:albl-exp4p}
\begin{algorithmic}[1]
\Require Strategy portfolio $\mathcal{P} = \{\mathcal{A}_1, \dots, \mathcal{A}_K\}$; pool $\mathcal{D}_\text{pool}$; initial labeled set $\mathcal{D}_\text{init}$; validation set $\mathcal{D}_\text{val}$; batch size $B$; rounds $T$
\State \textbf{Initialization:} $w_k^1 = 1, \forall k \in [K]$; $\mathcal{D}^0_{\text{train}} = \mathcal{D}_{\text{init}}$
\For{$t = 1, 2, \ldots T-1$}
    \State Receive advice vectors $\mathbf{\xi}^1(t), \dots, \mathbf{\xi}^K(t)$, with $\mathbf{\xi}^k(t) = \mathbf{e}_k \in \{0,1\}^K$
    \State $W^t = \sum_{k=1}^K w_k^t$
    \For{$k = 1, \dots, K$}
        \State $s_k^t = (1 - K p_{\min}) \sum_{k'=1}^K \frac{w_{k'}^t\xi_k^{k'}(t)}{W^t} + p_{\min} = (1 - K p_{\min}) \frac{w_k^t}{W^t} + p_{\min}$
    \EndFor
    \State Draw action $k^t \sim (s_1^t, \dots, s_K^t)$
    \State Define allocation vector $\mathbf{p}^t$ with $p^t_{k^t} = 1$ and $p^t_{k'} = 0, \forall k' \in [K], k' \neq k^t$
    \State Sample batch $\mathcal{B}^t$:
    \[
    \mathcal{B}^t \leftarrow \text{Sample}(\mathcal{A}_{k^t}, B, \mathcal{D}_{\text{pool}}\setminus \mathcal{D}_{\text{train}}^{t-1})
    \]
    \State Query labels for $\mathcal{B}^t$ and update the training dataset:
    \[
    \mathcal{D}^t_{\text{train}} \leftarrow \mathcal{D}^{t-1}_{\text{train}} \cup \{(x, y): x \in \mathcal{B}^t\}
    \]
    \State Retrain model on $\mathcal{D}^t_{\text{train}}$
    \State Receive reward $\mathbf{r}^t \in [0,1]^K$ as in Equation \ref{eq:albl-rew}
    \State Set $\hat{r}^t_{k^t} \gets \frac{r^t_{k^t}}{s^t_{k^t}}$ and $\hat{r}^t_{k'} = 0, \forall k' \in [K], k' \neq k^t$
    \For{$k = 1, \dots, K$}
        \State $\hat{y}_k^t = \mathbf{\xi}^k(t) \cdot \mathbf{\hat{r}^t}$
        \State $\hat{v}_k^t = \nicefrac{1}{s_{k}^t}$
        \State $w_k^{t+1} = w_k^t \exp\left( \frac{p_{\min}}{2} \left( \hat{y}_k^t + \hat{v}_k^t * p_{\min}\right) \right)$
    \EndFor
\EndFor
\end{algorithmic}
\end{algorithm}

\clearpage

\subsection{Upper Confidence Bound (UCB)}
\label{sec:UCB}
UCB is a classic multi-armed bandit algorithm that we apply for active learning. Each arm is a strategy $\mathcal{A}_k$ from the portfolio. At any iteration $t$, based on the confidence bound calculated from the mean reward and associated uncertainty, an arm $k^t$ is decided to be played. Assume $\tau$ is the performance metric that we use to evaluate the quality of model parameters (typically negative validation loss for regression tasks and validation accuracy for classification tasks). Therefore, the partial information reward $\mathbf{r}^t \in \mathbb{R}^K$ obtained as consequence of the action in iteration $t$ is defined as:
\begin{equation}
\label{eq:ucb-rew}
r^t_k =
\begin{cases}
\tau(\hat{\theta}^t) - \tau(\hat{\theta}^{t-1}) & \text{if } k = k^t, \\
0 & \text{otherwise,}
\end{cases}
\qquad \forall k \in [K],
\end{equation}
where $\hat{\theta}^t$ denotes the parameters of the surrogate model obtained
after training on $\mathcal{D}^t_{\mathrm{train}}$ (see Algorithm~\ref{alg:ucb_batch_al}).
\begin{algorithm}[h]
\caption{UCB: Upper Confidence Bound}
\label{alg:ucb_batch_al}
\begin{algorithmic}[1]
\Require Strategy portfolio $\mathcal{P} = \{\mathcal{A}_1, \dots, \mathcal{A}_K\}$; pool $\mathcal{D}_\text{pool}$; initial labeled set $\mathcal{D}_\text{init}$; validation set $\mathcal{D}_\text{val}$; batch size $B$; rounds $T$, exploration parameter $c$, pseudo-count $\gamma$, model performance metric $\tau$ 
\State \textbf{Initialization:} $D^0_{\text{train}} = \mathcal{D}_{\text{init}}; c \gets 1$
\State \textbf{Initialization:} counts $n_k \leftarrow \gamma$, values $\hat{\mu}_k \leftarrow 0$ for all $k \in [K]$

\State Train model on $\mathcal{D}^{0}_{\text{train}}$ and obtain $\hat{\theta}^{0}$

\For{ $t = 1, 2,\ldots, T-1$}
    \State Select strategy:
    \[
    k^t = \arg\max_k \left( \hat{\mu}_k + c \sqrt{\frac{\log t}{n_k}} \right)
    \]
    \State Define allocation vector $\mathbf{p}^t$ with $p^t_{k^t} = 1$ and $p^t_{k'} = 0, \forall k' \neq k^t$
    \State Sample batch $\mathcal{B}^t$:
    \[
    \mathcal{B}^t \leftarrow \text{Sample}(\mathcal{A}_{k^t}, B, \mathcal{D}_{\text{pool}}\setminus \mathcal{D}_{\text{train}}^{t-1})
    \]
    \State Query labels for $\mathcal{B}^t$ and update the training dataset:
    \[
    \mathcal{D}^t_{\text{train}} \leftarrow \mathcal{D}^{t-1}_{\text{train}} \cup \{(x, y): x \in \mathcal{B}^t\}
    \]
    \State Retrain model on $\mathcal{D}^t_{\text{train}}$ and obtain optimal parameters $\hat{\theta}^t$
    \State Receive reward $\mathbf{r}^t \in \mathbb{R}^K$ as in Eq. \ref{eq:ucb-rew}
    \State Update UCB statistics. For all $k \in [K]$:
    \[
    n_{k} \leftarrow n_{k} + \mathds{1}_{k=k^t}
    \]
    \[
    \hat{\mu}_{k} \leftarrow \hat{\mu}_{k} + \mathds{1}_{k = k^t} \left[\frac{1}{n_{k}} (r^t_k - \hat{\mu}_{k})\right]
    \]
\EndFor
\end{algorithmic}
\end{algorithm}

\subsection{SCRiBLe Algorithm}
SCRiBLe \citep{abernethy2009competing} is a linear bandit solver we apply for active learning in this work. Specifically, SCRiBLe operates directly on the allocation simplex $\Delta^{K-1}$. At each round it proposes an allocation vector and observes the resulting validation loss, which it treats as the cost to be minimized. 
Consequently, the action space for this algorithm is the convex compact simplex $\Delta^{K-1}$. For the (K-1)-dimensional simplex $\Delta^{K-1} = \{x \in \mathbb{R}^K: x \geq 0, \sum\limits_k x_k = 1\}$, the log-barrier is:
\begin{equation}
    \mathcal{C}(x) = -\sum\limits_{k=1}^{K-1} \log x_k - \log (1 - \sum\limits_{k=1}^{K-1} x_k)
\end{equation}
where $x_K = 1 - \sum\limits_{k=1}^{K-1} x_k$. The algorithm starts close to the center of the simplex, i.e. a uniform distribution and explores the domain while still being inside the simplex. At any iteration $t$, the algorithm perturbs the proportion vector in a randomly chosen direction of eigenvector of $\nabla^2\mathcal{C}(\mathbf{p}^t)$ and obtains a batch according to this updated proportion. The whole algorithm is described in Algorithm \ref{alg:scribble}.

\begin{algorithm}[h]
\caption{SCRiBLe}
\label{alg:scribble}
\begin{algorithmic}[1]
\Require Strategy portfolio $\mathcal{P} = \{\mathcal{A}_1, \dots, \mathcal{A}_K\}$; pool $\mathcal{D}_\text{pool}$; initial labeled set $\mathcal{D}_\text{init}$; validation set $\mathcal{D}_\text{val}$; batch size $B$; rounds $T$, $\eta > 0$, $\vartheta$-self-concordant barrier $\mathcal{C}$, $\mathcal{K} = \Delta^{K-1}$
\State \textbf{Initialization:} $D^0_{\text{train}} = \mathcal{D}_{\text{init}}$; $\mathbf{p}^1 \gets \arg\min_{\mathbf{p} \in \mathcal{K}} \mathcal{C}(\mathbf{p})$ = $\{\frac{1}{K}, \cdots, \frac{1}{K}\}$
\For{$t = 1, \dots, T-1$}
    \State Compute eigenpairs $\{(\mathbf{e}_k, \lambda_k)\}_{k=1}^{K-1}$ of $\nabla^2 \mathcal{C}(\mathbf{p}^{t})$
    \State Sample $k^t \sim \mathrm{Unif}(\{1,\dots,K-1\})$, $\;\varepsilon^t \sim \{-1,+1\}$ uniformly
    \State $\mathbf{p}^t \gets \mathbf{p}^t + \varepsilon^t \lambda_{k^t}^{-1/2} \mathbf{e}_{k^t}$
    \State Sample strategy sub-batches: $\mathcal{B}^t_i \leftarrow \text{Sample}(\mathcal{A}_{i}, \lfloor p^t_i * B\rfloor, \mathcal{D}_{\text{pool}}\setminus \mathcal{D}^{t-1}_{\text{train}})$
    \State Aggregate batch $\mathcal{B}^t = \bigcup\limits_{k=1}^K \mathcal{B}^t_k; $
    \State Query labels for $\mathcal{B}^t$ and update the training dataset:
    \[
    \mathcal{D}^t_{\text{train}} \leftarrow \mathcal{D}^{t-1}_{\text{train}} \cup \{(x, y): x \in \mathcal{B}^t\}
    \]
    \State Retrain model on $\mathcal{D}^t_{\text{train}}$ and obtain optimal parameters $\hat{\theta}^t$
    \State Compute cost $c^t \defeq \mathcal{R}(\mathcal{D}_{\text{val}}; \hat{\theta}^t)$
    \State $\tilde{\mathbf{c}}^t \gets (K-1) \, c^t \, \varepsilon^t \lambda_{k^t}^{1/2} \mathbf{e}_{k^t}$
    \State $\mathbf{p}^{t+1} \gets \arg\min_{\mathbf{p} \in \mathcal{K}} \left( \eta \sum_{s=1}^{t} \tilde{\mathbf{c}}^{s\top} \mathbf{p} + \mathcal{C}(\mathbf{p}) \right)$
\EndFor
\end{algorithmic}
\end{algorithm}
\subsection{SelectAL}
\label{app:selectal}
SelectAL~\citep{hacohen2024select} was originally introduced as a retraining-based active learning framework that treats strategy selection as a choice between two regimes: a ``low-budget'' regime focused on representativeness and a ``high-budget'' regime focused on uncertainty and atypicality. To determine the optimal regime at each iteration, SelectAL assesses the utility of a strategy as follows. At any iteration $t$, it treats the labeled set $\mathcal{D}^t_{\text{train}}$ as if it were unlabeled and removes a small subset of the points, specifically $\max(\lfloor 0.05 \cdot |\mathcal{D}^t_{\text{train}}| \rfloor, 1)$ (the $5\%$ value follows the original publication), chosen according to a candidate strategy's logic, and measures the resulting change in model performance. The strategy whose deletion induces the largest drop in performance is judged most informative and is selected for the subsequent forward acquisition step.

While the original work suggests using one fixed representative strategy for deletion per budget category (which may differ from the corresponding category's acquisition strategy), such a constraint limits the framework's applicability in complex portfolios like our setup (C2) (Section~\ref{clas:intro}). Therefore, in our implementation, we allow SelectAL to evaluate all $K$ strategies in the portfolio by setting the forward-selection and deletion-based strategies to be identical for each candidate. This adaptation ensures a fair comparison by allowing SelectAL to operate over the same strategy pool as FractAL.

\subsection{AutoAL}
AutoAL~\citep{wang2024autoal} frames strategy selection as a differentiable search problem via two co-trained networks. First, FitNet estimates the informativeness of each unlabeled sample. Second, SearchNet learns a continuous weighting over the strategy portfolio. Each candidate strategy contributes a score to every unlabeled point, and SearchNet combines these into a single per-point score via gradient-based optimization on the labeled set (relaxing what would otherwise be a discrete strategy search into a continuous one, in the spirit of differentiable architecture search); the points with the highest combined scores are then selected for labeling. Unlike FractAL, which allocates budget fractionally across strategies to form disjoint sub-batches, AutoAL blends strategies at the level of individual point scores, and its bilevel optimization introduces additional hyperparameters and per-round computational overhead from jointly training the two auxiliary networks.

\clearpage
\section{Allocation Diagnostics}
\label{app:precision_k}

We seek to establish a quantitative notion of quality for a strategy portfolio allocation that can be compared across multiple adaptive strategy selection methods.  One potential avenue for this diagnostic is to consider ``pure-strategy'' test performance (i.e., running each strategy alone across the full AL run), to decide on which strategies are empirically best- and worst-performing at each round. Based on this evaluation, a natural desideratum for a soft strategy selection method is to concentrate budget on the strategies that perform well and withhold it from those that do not.

This criterion is not valid in general, because samples from different strategies may be complementary. In those cases, combining samples from a seemingly poor strategy with a strong one could create synergies and boost downstream performance. Our experiments, however, suggest this does not happen here. No strategy selection method in our study outperforms the best pure strategy in the portfolio (Tables~\ref{tab:clas-all-pure}, \ref{tab:reg-all-pure}, \ref{tab:bio-all-pure}). Remarkably, this holds even for AutoAL, which blends strategies at the level of individual point scores rather than allocating budget across them, and can therefore in principle select points that exploit complementarity between strategies. It nonetheless underperforms both random and a uniform allocation on every aggregate measure (Table~\ref{rand-impr-results}). We therefore take the pure-strategy ranking as our reference, while noting that portfolios with genuinely complementary strategies remain untested here.

On some tasks, several strategies perform comparably well, so identifying a single best (or worst) strategy is too strict a standard. A method that correctly favors any strategy within this comparable set should not be penalized for not isolating the single top performer. We therefore evaluate against the best and worst \emph{sets} of pure strategies, of cardinality $k$ (reported per task in Table~\ref{tab:recall-k}). At each AL round, Best-Set Overlap is the fraction of the best set that is also among the top-$k$ strategies by the method's internal allocation, and Worst-Set Overlap is the fraction of the worst set that is also among the bottom-$k$ strategies.

To determine $k$ for each task, we take the mean $\pm$ one standard deviation at each round of each pure strategy's test performance (across seeds, after averaging within splits) and count how many strategies' bands overlap the best-performing (or, for Worst-Set Overlap, the worst-performing) strategy's band; the cardinality of the set is one plus this count. We take the mode of this per-round value across rounds $t{=}2$ to $T{-}1$ as the fixed set size reported for that task. This is a practical heuristic rather than a formal hypothesis test, and portfolios whose strategies are more tightly clustered in performance naturally have larger cardinality sets.

\begin{table}[h]
\caption{Cardinality of the Best and Worst Sets for the allocation quality assessment for all the datasets}
    \label{tab:recall-k}
    \centering
    \small
    \begin{tabular}{l cc}
        \toprule
        Dataset & $|\text{Best Set}|$ & $|\text{Worst Set}|$\\
        \midrule
         CIFAR 10 (C1) & 1 & 1\\
         CIFAR 10 (C2) & 2 & 1\\
         KEGG & 2 & 1\\
         SARCOS & 2 & 3\\
         Diamonds & 2 & 2\\
        BMDMs (B1)  & 1 & 2\\
        T cells (B2)  & 1 & 2\\
        \bottomrule
    \end{tabular}
\end{table}

\clearpage
\section{Experimental Setups}
\label{app:exp-setups}
We now discuss the setups used in each experiment in more detail. A summary of different active learning configurations is presented in Table \ref{tab:al-config}.

\begin{table}[h]
\caption{Active learning configurations for each setup: surrogate model, sizes of the initial labeled, validation, test and pool sets, and batch size $B$. All strategy selection methods and pure strategies share the same configuration within a setup.}
    \label{tab:al-config}
    \centering
    \small
    \begin{tabular}{l cccccc}
        \toprule
        Dataset & Surrogate Model & $|\mathcal{D}_{\text{init}}|$ & $|\mathcal{D}_{\text{val}}|$ & $|\mathcal{D}_{\text{test}}|$ & $|\mathcal{D}_{\text{pool}}|$ & $B$\\
        \midrule
         CIFAR 10 & ResNet-18 & 500 & 1024 & 10000 &  48976 & 500\\
         % CIFAR 100 & ResNet-18 & 2000 & 2048 & 10000 & 47952 & 2000\\
         KEGG & 3 layer MLP & 500 & 1024 & 12921 & 50663 & 500\\
         SARCOS & 3 layer MLP & 500 & 1024 & 8896 & 34564 & 500\\
         Diamonds & 3 layer MLP & 500 & 1024 & 10788 & 42128 & 500\\
        BMDMs  & 1 layer MLP & 35 & 40 & 103 &  371 & 35\\
        CD4+ T cells  & 1 layer MLP & 35 & 40 & 141 & 528  & 35\\
        \bottomrule
    \end{tabular}
\end{table}

\subsection{Classification Dataset}
\label{app:cf10-mot}

\paragraph{Data processing. } We follow the data processing recipe from \citet{hacohen2024select}. Starting from the raw CIFAR 10 images, we first apply random cropping of the same size $32 \times 32$ with a padding of 4, followed by random horizontal flips. Post these augmentations, the pixel values are scaled to $[0, 1]$ and are then standardized using the channel-wise mean and standard deviation of $[0.4914, 0.4822, 0.4465]$ and $[0.247 , 0.2435, 0.2616]$ respectively.

\paragraph{Surrogate Model. } We use a ResNet-18~\citep{he2016deep} as the surrogate model for CIFAR 10. The ResNet is trained for 50 epochs in each active learning iteration. We used an SGD optimizer, with 0.9 Nesterov momentum, 0.0003 weight decay, cosine learning rate scheduling with a base learning rate of 0.025, dropout with probability $0.5$ and batch size of 100 examples. We borrow all the architectural and hyperparameter details from \citet{hacohen2024select}. We select the best checkpoint based on validation accuracy. For ProbCover, we use $\delta = 0.6$ as outlined in \citet{yehuda2022active}.

\paragraph{Embeddings. } We used the $512$ dimensional $\text{L}2-$normalized SimCLR~\citep{chen2020simple} embeddings\footnote{Embeddings obtained from: \href{https://github.com/fbickfordsmith/ssl-embeddings}{https://github.com/fbickfordsmith/ssl-embeddings}}  for strategies like CoreSet, ProbCover and TypiClust, that operate in the embedding space. Note that the embeddings remain fixed throughout the AL run, as prescribed in \citet{hacohen2024select}. 

\subsection{Regression Datasets}
\label{app:reg-setup}

\paragraph{Data processing. } All three regression datasets, KEGG, SARCOS and Diamonds use the same data processing pipeline. We closely follow \citet{holzmuller2023framework} for the data processing recipe. Specifically, we process the input features as follows:
\begin{equation}
    x_j^{\text{processed}} = 5 \text{tanh}\left(\frac{1}{5}\frac{x_j - \hat{\mu}_j}{\hat{\sigma}_j}\right)
\end{equation}
where $\hat{\mu}$ and $\hat{\sigma}$ are the feature wise mean and standard deviation of the dataset $\mathcal{D}_{\text{pool}}$. This is done to make sure all input features lie in the range $[-5,5]$ and simplify the learning process for the MLP.

\paragraph{Surrogate Model. } We take inspiration from \citet{holzmuller2023framework} for the surrogate model and use a 3 layer MLP with ReLU activations and 2 hidden layers of 512 dimension each. We use a weight decay of 0.0003 for regularization and optimize the network using Adam optimizer with a base learning rate of 0.01. At every AL iteration, the surrogate model is trained for 256 epochs with a batch size of 64 and a learning rate schedule that decays linearly with the epochs. We select the best checkpoint based on validation loss.

\paragraph{Embeddings. } We use the processed input features concatenated together as the embedding for each input sample for strategies like CoreSet, ProbCover and TypiClust.

\paragraph{Portfolio.} For our portfolio, we choose only Random, CoreSet, ProbCover and TypiClust as they naturally apply to these setups. MaxEnt, MinMargin and LeastConf are more suited for classification tasks as they operate with logits and hence are removed from the portfolio for regression setups.

\subsection{High-Throughput Perturbation Screens}
\label{app:pert-pair}

\paragraph{B1 (BMDMs) data processing.} Starting from the matched Perturb-seq and OPS count data~\citep{kudo2026scalable}, we processed the transcriptomic readouts with \texttt{scanpy} for quality control and normalization, used ContrastiveVI~\citep{weinberger2023isolating} to isolate perturbation-induced variation from background biological variation, and computed per-perturbation log-fold changes (against non-targeting controls) with lvm-DE~\citep{boyeau2023empirical} across 4{,}000 highly variable genes. Imaging embeddings were obtained by averaging CellProfiler features~\citep{carpenter2006cellprofiler} per perturbation and retaining the top 70 principal components.

\paragraph{B2 (CD4+ T cells) data processing.} We used the publicly available genome-scale Perturb-seq dataset of primary human CD4+ T cells~\citep{zhu2025genome}, downloaded from the CZI Virtual Cell Models repository. Per-perturbation log-fold changes against non-targeting controls were computed at both the 8-hour and 48-hour time points across 4{,}000 highly variable genes. Input embeddings were constructed by applying PCA to the 8-hour log-fold changes and retaining the top 70 principal components; the 48-hour log-fold changes serve as the prediction target.

\paragraph{Surrogate model.} We use an MLP with a single hidden layer and ReLU activation, followed by a linear output layer that maps the 70-dimensional
input embedding to a 4,000-dimensional log-fold change vector. The model is trained with SGD (batch size 64) for 600 epochs using a learning rate of 1.0 with a linear scheduler lower-bounded at 0.1. We select the best checkpoint based on validation loss. Ground-truth labels for each of the 4{,}000 highly variable genes are obtained by thresholding absolute log-fold changes, $|\mathrm{LFC}| > 0.5$.

\paragraph{Portfolio.} Following similar logic as in Appendix \ref{app:reg-setup}, we choose only Random, CoreSet, ProbCover and TypiClust for our portfolio in these tasks.

\clearpage
\section{Performance Across Setups}
\label{app:perf-setups}

The seven setups report different performance metrics (accuracy, MSE, and AUPRC) which are not directly comparable, making averaging raw performance differences across setups meaningless. For a method with terminal performance $p$ on a given setup, we therefore define its relative gain over Random as
\begin{equation}
g = \frac{p - p_{\text{random}}}{p_{\text{best}} - p_{\text{random}}},
\end{equation}
where $p_{\text{random}}$ and $p_{\text{best}}$ denote the terminal performance of Random and of the best pure strategy on that setup. By construction, $g = 0$ corresponds to matching Random and $g = 1$ to matching the best pure strategy in the portfolio. The gap from $g$ to $1$ measures the cost of not knowing in advance which strategy is best, and $g < 0$ indicates performance worse than random sampling.

We report the mean, worst-case, and rank summaries across the seven setups in Table~\ref{rand-impr-results}. Ranks are computed per setup among the seven methods and then averaged, with ties resolved from unrounded performance values.

\begin{table}[h]
  \caption{Relative gain and rank among the six adaptive
methods and the equal split control, averaged across the seven setups. Mean Rank and Worst Rank report the average and maximum rank among the 7 methods (rank 1 = best, computed per setup then averaged/maximized across setups). Median Rank corresponds to the median value of rank across the setups. Mean Gain and Rank values are mean $\pm$ SE across setups.}
  \label{rand-impr-results}
  \centering
  \small
  \tabcolsep=0.11cm
  \begin{tabular}{lccccc}
    \toprule
    Method & Mean Gain ($\uparrow$)  & Worst Gain ($\uparrow$) & Mean Rank ($\downarrow$) & Median Rank ($\downarrow$) & Worst Rank ($\downarrow$)\\
    \midrule
    Equal Split & 0.230 \se{0.214} & -0.888 & 4.29 \se{0.77} & 4 & 7\\
    ALBL & 0.150 \se{0.227} & -1.055 & 5.43 \se{0.27} & 6 & 6\\
    UCB & 0.228 \se{0.214} & -1.055 & 4.14 \se{0.55} & 4 & 6\\
    SCRiBLe & 0.266 \se{0.208} & -0.944 & 4.14 \se{0.47} & 4 & 6\\
    SelectAL & 0.370 \se{0.178} & -0.611 & 3.14 \se{0.43} & 3 & 5\\
    AutoAL & 0.027 \se{0.251} & -1.055 & 5.71 \se{0.77} & 7 & 7\\
    FractAL & \textbf{0.689 \se{0.096}} & \textbf{0.222} & \textbf{1.14 \se{0.13}} & \textbf{1} & \textbf{2}\\
    \bottomrule
  \end{tabular}
\end{table}

FractAL has the highest mean gain (0.689) and the only positive \textit{worst-case} gain (0.222): every other method, including the non-adaptive Equal Split control, has at least one setup where its terminal performance falls below Random's, with AutoAL, ALBL, and UCB falling short by more than the entire Random-to-best span in their weakest setup. FractAL is also the top-ranked method on average (mean rank 1.14 of 7, median 1, against 3.14 and median 3 for the next closest method, SelectAL). We also note that no method attains $g > 1$ on any setup (see also Appendix~\ref{app:precision_k}). These results indicate that FractAL's advantage reflects consistent competitiveness across the full range of setups we evaluate.

\clearpage
\section{Per-Setup Results with Pure Strategies Performance}
\label{app:complete-tables}
We report here the performance of every pure strategy in each setup's portfolio, alongside the one from all of the adaptive methods (Tables \ref{tab:clas-all-pure}, \ref{tab:reg-all-pure} and \ref{tab:bio-all-pure}). Note that no adaptive method has prior knowledge of which strategies perform well, and each strategy selection method must infer it during the run from its reward signal. The best and worst pure strategies therefore serve as a reference range rather than as competing methods. In each table we bold the best pure strategy and the best strategy selection method, marking ties where they occur. Across all setups, FractAL achieves the highest test performance among the strategy selection methods, and either matches the best pure strategy or ranks second in the portfolio.

\begin{table}[h]
    \captionof{table}{Accuracy ($\uparrow$) on CIFAR-10 at the end of $T=10$ rounds.}
  \label{tab:clas-all-pure}
  \centering
  \small
  \tabcolsep=0.20cm
  \begin{tabular}{ccc}
    \toprule
    Method  & (C1): CIFAR-10 ($K=3$)  & (C2): CIFAR-10 ($K=8$)\\
    \midrule
    \multicolumn{3}{c}{\textbf{Individual Strategies}}\\
    \midrule
    Random &  69.4 \se{0.1} & 69.4 \se{0.1}\\
    CoreSet & 58.7 \se{0.2} & 58.7 \se{0.2}\\
    ProbCover & \textbf{71.2 \se{0.1}} & \textbf{71.2 \se{0.1}}\\
    BADGE & 69.6 \se{0.2} & 69.6 \se{0.2}\\
    LeastConf & 69.6 \se{0.1} & 69.6 \se{0.1}\\
    MaxEnt & 69.5 \se{0.1} & 69.5 \se{0.1}\\
    MinMargin & 69.7 \se{0.1} & 69.7 \se{0.1}\\
    TypiClust & 68.8 \se{0.1} & 68.8 \se{0.1}\\
    \midrule
        \multicolumn{3}{c}{\textbf{Strategy Selection Methods}}\\
\midrule
    Equal Split  & 67.8 \se{0.1} &69.1 \se{0.1}\\
    UCB & 67.5 \se{0.1} & 69.5 \se{0.1}\\
    ALBL & 67.5 \se{0.5} & 68.6 \se{0.3}\\
    SCRiBLe & 67.7 \se{0.2} & 69.2 \se{0.1}\\
    SelectAL & 68.3 \se{0.1} & 69.4 \se{0.1}\\
    AutoAL & 67.5 \se{0.1} & 68.3 \se{0.1}\\
    FractAL  & \textbf{70.2 \se{0.2}} & \textbf{69.8 \se{0.2}}\\
    \bottomrule
  \end{tabular}
\end{table}

\begin{table}[b]
  \caption{Test MSE ($\times 10^{-1}$, $\downarrow$) on tabular regression datasets at the end of $T=10$ rounds.}
  \label{tab:reg-all-pure}
  \centering
  \small
  \tabcolsep=0.15cm
  \begin{tabular}{cccc}
    \toprule
    Method  & KEGG  & SARCOS & Diamonds\\
    \midrule
    \multicolumn{4}{c}{\textbf{Individual Strategies}}\\
    \midrule
    Random & 0.79 \se{0.02} & 0.79 \se{0.01} & 0.11 \se{0.01}\\
    CoreSet & 0.53 \se{0.01} & 0.73 \se{0.01} & \textbf{0.08 \se{0.01}} \\
    ProbCover & \textbf{0.49 \se{0.00}} & \textbf{0.72 \se{0.01}} & 0.12 \se{0.01}\\
    TypiClust & 0.97 \se{0.01} & 0.77 \se{0.01} & 0.10 \se{0.01}\\
    \midrule
        \multicolumn{4}{c}{\textbf{Strategy Selection Methods}}\\
\midrule
    Equal Split & 0.62 \se{0.01} & 0.76 \se{0.01} & \textbf{0.08 \se{0.01}}\\
    UCB & 0.61 \se{0.01} & 0.77 \se{0.01} & 0.10 \se{0.01}\\
    ALBL & 0.63 \se{0.02} & 0.77 \se{0.01} & 0.09 \se{0.01}\\
    SCRiBLe & 0.63 \se{0.01} & 0.75 \se{0.01} & 0.09 \se{0.01}\\
    SelectAL &  0.59 \se{0.01} & 0.76 \se{0.01} & 0.09 \se{0.01}\\
    AutoAL   &  0.65 \se{0.01} & 0.82 \se{0.01} & 0.10 \se{0.01}\\
    FractAL & \textbf{0.58 \se{0.01}} & \textbf{0.74 \se{0.01}} & \textbf{0.08 \se{0.01}}\\
    \bottomrule
  \end{tabular}
\end{table}

\begin{table}[t]
  \caption{AUPRC ($\uparrow$) for high-throughput perturbation datasets after $T=6$ rounds.}
  \label{tab:bio-all-pure}
  \centering
  \small
  \tabcolsep=0.11cm
  \begin{tabular}{ccc}
    \toprule
    Method  & \textbf{B1:} BMDM  & \textbf{B2:} CD4+ T Cell\\
    \midrule
    \multicolumn{3}{c}{\textbf{Individual Strategies}}\\
    \midrule
    Random & 0.4165 \se{0.0011} & 0.3403 \se{0.0045}\\
    CoreSet & \textbf{0.4377 \se{0.0005}} & \textbf{0.3696 \se{0.0002}} \\
    ProbCover & 0.4305 \se{0.0004} & 0.3441 \se{0.0003}\\
    TypiClust & 0.4301 \se{0.0006} & 0.3413 \se{0.0061} \\
    \midrule
    \multicolumn{3}{c}{\textbf{Strategy Selection Methods}}\\
    \midrule
    Equal Split & 0.4279 \se{0.0006} & 0.3442 \se{0.0065}\\
    UCB & 0.4323 \se{0.0013} & 0.3590 \se{0.0015}\\
    ALBL & 0.4289 \se{0.0018} & 0.3548 \se{0.0021}\\
    SCRiBLe & 0.4299 \se{0.0006} & 0.3557 \se{0.0006}\\
    SelectAL  &  0.4346 \se{0.0010} & 0.3579 \se{0.0012}\\
    AutoAL  &  0.4336 \se{0.0005} & 0.3603 \se{0.0008}\\
    FractAL & \textbf{0.4363 \se{0.0006}} & \textbf{0.3642 \se{0.0013}}\\
    \bottomrule
  \end{tabular}
\end{table}

\clearpage
\section{Ablation Studies}
\label{app-ablation}
We perform the ablation experiments for our proposed FractAL method. In particular, we test the sensitivity of the method to the EWMA smoothing parameter, the contribution of the mirror descent update, the utility of the influence-based rewards as compared to retraining based rewards and the importance of soft fractional allocation among portfolio strategies. We detail the results of each of these experiments below.

\subsection{Smoothing Parameter}
We sweep over $\alpha$ values for across all setups and report the terminal accuracy. The numbers are averaged over 5 splits and 10 runs per split. Table~\ref{abl-ewma} contains the accuracy numbers for different values of the smoothing parameter $\alpha$. The similar performance across different $\alpha$ values highlights the robustness of the method to this hyperparameter. No Smoothing (i.e. $\alpha = 1.0$) performs particularly worse on two out of seven setups, CD4+ T cells and Diamonds.

\begin{table}[h]
    \caption{Terminal Performance of FractAL for different values of the EWMA smoothing parameter $\alpha$ on all seven setups. MSE numbers represent (MSE $\times 10^{-1}$)}
  \label{abl-ewma}
  \centering
  \footnotesize
  \tabcolsep=0.1cm
  \begin{tabular}{lccccccc}
    \toprule
    Dataset  & Metric  & $\alpha=0.1$ & $\alpha=0.3$ & $\alpha=0.5$ & $\alpha=0.7$ & $\alpha=1.0$ \\[0.1cm]
    \midrule
    CIFAR-10 (C1) & Acc ($\uparrow$) & 70.0 \se{0.2} & 70.1 \se{0.2} & 70.2 \se{0.2} & 70.0 \se{0.2} & 70.2 \se{0.1}\\
    CIFAR-10 (C2) & Acc ($\uparrow$) & 70.0 \se{0.2} & 69.9 \se{0.1} & 69.8 \se{0.2} & 70.1 \se{0.2} & 69.9 \se{0.1}\\
    KEGG & MSE ($\downarrow$) & 0.60 \se{0.01} & 0.60 \se{0.01} & 0.58 \se{0.01} & 0.58 \se{0.01} & 0.59 \se{0.01}\\
    SARCOS & MSE ($\downarrow$) & 0.73 \se{0.01} & 0.76 \se{0.01} & 0.74 \se{0.01} & 0.74 \se{0.01} & 0.74 \se{0.01}\\
    Diamonds & MSE ($\downarrow$) & 0.09 \se{0.01} & 0.09 \se{0.01} & 0.08 \se{0.01} & 0.09 \se{0.01} & 0.10 \se{0.01}\\
    BMDMs  & AUPRC ($\uparrow$) & 0.4348 \se{0.0008} & 0.4356 \se{0.0006} & 0.4363 \se{0.0006} & 0.4360 \se{0.0008} & 0.4365 \se{0.0008}\\
    CD4+ T cells & AUPRC ($\uparrow$) & 0.3629 \se{0.0013} & 0.3630 \se{0.0013} & 0.3642 \se{0.0013} & 0.3644 \se{0.0013} & 0.3593 \se{0.0050}\\
    \bottomrule
  \end{tabular}
\end{table}

\subsection{Allocation Optimization Algorithm}
We replace the mirror descent allocation update with the following additive update with clipping and renormalization:
\begin{equation}
\begin{aligned}
    f^t_k &= \max\!\left(q_k^t + \lambda\,\eta^t\,\bar{r}^t_k,\; p_{\min}\right), \\
    q_k^{t+1} &= \frac{f^t_k}{\sum_{j=1}^K f^t_j},
\end{aligned}
\end{equation}
where we use a fixed $\lambda=0.15$. $p_k^{t+1}$ is then computed as in Equation~\ref{meth-pcal}. We benchmark this approach against FractAL across all setups (Table~\ref{linear-desc-results}). The mirror-descent update is never worse, and better on five of seven datasets, supporting its use for navigating the allocation simplex over an additive update with clipping and renormalization.

\begin{table}[h]
\caption{Performance comparison of mirror descent vs additive update with clipping and renormalization across all setups.}
  \label{linear-desc-results}
  \centering
  \small
  \tabcolsep=0.11cm
  \begin{tabular}{lccc}
    \toprule
    Dataset  & Metric  & Mirror Desc. & Additive (clip + renorm.)\\[0.1cm]
    \midrule
    CIFAR-10 (C1) & Acc ($\uparrow$) & \textbf{70.2 \se{0.2}} & 69.8 \se{0.2} \\
    CIFAR-10 (C2) & Acc ($\uparrow$) & \textbf{69.8 \se{0.2}} & \textbf{69.8 \se{0.1}}\\
    KEGG & MSE ($\times 10^{-1}$, $\downarrow$) & \textbf{0.58 \se{0.01}} & 0.61 \se{0.01}\\
    SARCOS & MSE ($\times 10^{-1}$, $\downarrow$) & \textbf{0.74 \se{0.01}} & 0.76 \se{0.01}\\
    Diamonds & MSE ($\times 10^{-1}$, $\downarrow$) & \textbf{0.08 \se{0.01}} & 0.09 \se{0.01}\\
    BMDMs  & AUPRC ($\uparrow$) & \textbf{0.4363 \se{0.0006}} & 0.4299 \se{0.0005}\\
    CD4+ T cells & AUPRC ($\uparrow$) & \textbf{0.3642 \se{0.0013}} & 0.3533 \se{0.0051}\\
    \bottomrule
  \end{tabular}
\end{table}

\subsection{Influence Reward Attribution}
\label{sub-change-infl}
We test the utility of the influence rewards by swapping them with changes in performance metrics on the validation set after retraining the model (keeping the fractional allocation framework unchanged). More specifically, the retraining reward for a strategy is the difference in performance on validation set with and without the strategy's latest sub-batch of chosen points. We perform the experiment across all our setups and report the numbers averaged over 5 splits and 10 seeds per split (Table~\ref{infl-retr-results}).

\begin{table}[h]
\caption{Performance comparison with influence vs retraining based reward across all setups.}
  \label{infl-retr-results}
  \centering
  \small
  \tabcolsep=0.11cm
  \begin{tabular}{lccc}
    \toprule
    Dataset  & Metric  & Influence & Retraining\\[0.1cm]
    \midrule
    CIFAR-10 (C1) & Acc ($\uparrow$) & 70.2 \se{0.2} & \textbf{70.5 \se{0.1}}\\
    CIFAR-10 (C2) & Acc ($\uparrow$) & 69.8 \se{0.2} & \textbf{70.2 \se{0.2}}\\
    KEGG & MSE ($\times 10^{-1}$, $\downarrow$) & \textbf{0.58 \se{0.01}} & 0.59 \se{0.01}\\
    SARCOS & MSE ($\times 10^{-1}$, $\downarrow$) & \textbf{0.74 \se{0.01}} & \textbf{0.74 \se{0.01}}\\
    Diamonds & MSE ($\times 10^{-1}$, $\downarrow$) & \textbf{0.08 \se{0.01}} & \textbf{0.08 \se{0.01}}\\
    BMDMs  & AUPRC ($\uparrow$) & 0.4363 \se{0.0006} & \textbf{0.4383 \se{0.0006}}\\
    CD4+ T cells & AUPRC ($\uparrow$) & \textbf{0.3642 \se{0.0013}} & 0.3514 \se{0.0006}\\
    \bottomrule
  \end{tabular}
\end{table}

The influence reward is on par with the retraining-based reward: it trails by at most 0.4 accuracy points on CIFAR-10 and matches or outperforms on four of the other five tasks. However, in runtime cost, influence-based AL run is overall $2.5\times$ faster than the retraining based AL run.

To further establish influence rewards as a sound substitute for retraining reward, we also study the correlation between the two rewards across rounds for the CIFAR-10 (C1) setup. Specifically, we report the Spearman rank correlation between the two reward rankings as well as the agreement (in percentage) between the two rewards for the best and worst performing strategies. Table~\ref{infl-retr-corr-results} reports the aforementioned numbers. The rank correlation ($\pm$SE) ranges between 0.7 and 0.8, while the best and worst agreements are generally $>70\%$ indicating that the two rewards rank strategies consistently. Therefore, the influence estimator is a reliable proxy for strategy utility and hence a sound substitute for the more expensive retraining-based attribution within the proposed fractional allocation framework.

\begin{table}[h]
\caption{Correspondence between the retraining based reward and corresponding influence estimates across different active learning rounds for the CIFAR-10 (C1) setup.}\vspace{0.1cm}
  \label{infl-retr-corr-results}
  \centering
  \small
  \tabcolsep=0.11cm
  \begin{tabular}{lccc}
    \toprule
    Round  & Spearman Rank Correlation  & Best Agreement (\%) & Worst Agreement (\%)\\[0.1cm]
    \midrule
    2 & 0.70 \se{0.08} & 70.0 & 80.0\\
    3 & 0.80 \se{0.06} & 80.0 & 83.3\\
    4 & 0.78 \se{0.04} & 70.0 & 86.6\\
    5 & 0.73 \se{0.08} & 80.0 & 76.6\\
    6 & 0.73 \se{0.07} & 76.6 & 76.6\\
    \bottomrule
  \end{tabular}
\end{table}

\subsection{Soft Fractional Allocation}
While preserving the influence-based reward attribution, we replace the proposed \textit{soft} fractional allocation with a hard allocation mechanism in which the strategy with the highest reward receives the entire budget for the round. We call this variant FractAL (Hard). Soft allocation improves on five of seven setups and is comparable on the remaining two (Table~\ref{hard-soft-results}). Distributing the budget across strategies therefore outperforms committing it to the single highest-reward strategy, even when both use the same reward signal.

\begin{table}[h]
\caption{Performance comparison of hard versus soft allocation across all setups.}
  \label{hard-soft-results}
  \centering
  \small
  \tabcolsep=0.11cm
  \begin{tabular}{lccc}
    \toprule
    Dataset  & Metric  & FractAL & FractAL (Hard)\\[0.1cm]
    \midrule
    CIFAR-10 (C1) & Acc ($\uparrow$) & \textbf{70.2 \se{0.2}} & 69.3 \se{0.1}\\
    CIFAR-10 (C2) & Acc ($\uparrow$) & 69.8 \se{0.2} & \textbf{70.0 \se{0.1}}\\
    KEGG & MSE ($\times 10^{-1}$, $\downarrow$) & \textbf{0.58 \se{0.01}} & 0.60 \se{0.01}\\
    SARCOS & MSE ($\times 10^{-1}$, $\downarrow$) & \textbf{0.74 \se{0.01}} & \textbf{0.74 \se{0.01}}\\
    Diamonds & MSE ($\times 10^{-1}$, $\downarrow$) & \textbf{0.08 \se{0.01}} & 0.09 \se{0.01}\\
    BMDMs  & AUPRC ($\uparrow$) & \textbf{0.4363 \se{0.0006}} & 0.4309 \se{0.0010}\\
    CD4+ T cells & AUPRC ($\uparrow$) & \textbf{0.3642 \se{0.0013}} & 0.3592 \se{0.0017}\\
    \bottomrule
  \end{tabular}
\end{table}

\clearpage
\section{Detailed Runtime Analysis}
\label{app:runtime}
Table~\ref{walltime-results} reports wall-clock time for a single complete AL run on CIFAR-10 (C2, $K=8$) on one NVIDIA L40S GPU, decomposed into two phases: Acquisition, the time to query strategies in the portfolio and assemble the batch, and Allocation Update, the time to compute the reward signal and update the allocation for the next round.

\begin{table}[h]
\caption{Runtime (in minutes) of different components of a single AL run for different methods. We use the CIFAR10 (C2, $K=8$) setting for these measurements.}\vspace{0.1cm}
  \label{walltime-results}
  \centering
  \small
  \tabcolsep=0.11cm
  \begin{tabular}{lccc}
    \toprule
    Method  & Acquisition  & Allocation Update & Total\\[0.1cm]
    \midrule
    ALBL & 1.2 & 7.3 $\times 10^{-5}$ & 1.2\\
    UCB & 1.5 & 7.2 $\times 10^{-5}$ & 1.5\\
    SCRiBLe & 8.8 & 4.6 $\times 10^{-4}$ & 8.8\\
    SelectAL  &  3.3 & 19.8 & 23.1\\
    AutoAL  &  20.7 & 28.5 & 49.2\\
    FractAL & 8.8 & 0.5 & 9.3\\
    \bottomrule
  \end{tabular}
\end{table}

\clearpage
\section{Additional Budget Allocation Plots}
\label{app:more-plots}
For completeness, we provide the performance plots for pure strategies in the portfolio and budget allocation evolution plots for each of our methods and baselines on all our setups. We often notice that FractAL increases the allocation to the empirically best strategies across all setups, while reducing the share of the suboptimal ones in the portfolio. Detailed analysis for each plot is included in its respective caption.

All allocation and performance plots (Figures~\ref{fig:cf10_3comp_frac_evolution}--\ref{fig:tcell_frac_evolution}) show rounds $t=2$ to $t=T-1$. We omit
$t=0$, which corresponds to acquiring $\mathcal{D}_{\text{init}}$, and $t=1$, at which every method, including FractAL, starts from an equal split.

\begin{figure}[h]
  \centering
  \begin{subfigure}[t]{0.24\textwidth}
    \centering
    \includegraphics[width=\textwidth]{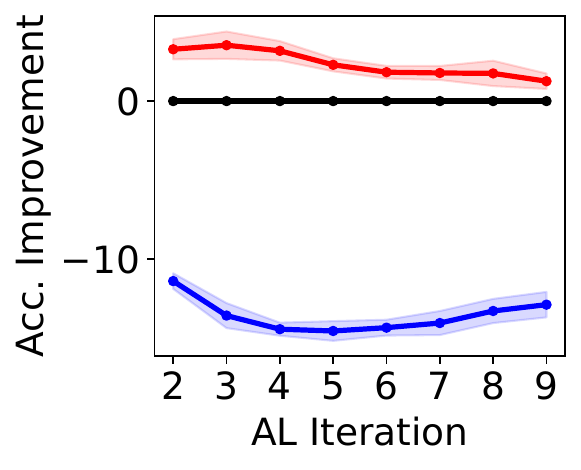}
    \caption{Test Performance}
  \end{subfigure}
  \hfill
  \begin{subfigure}[t]{0.24\textwidth}
    \centering
    \includegraphics[width=\textwidth]{figures/more_size_plots/cf10_3comp/cf10_3comp_3comp_FractAL_strat_evolution.pdf}
    \caption{FractAL}
  \end{subfigure}
  \hfill
  \begin{subfigure}[t]{0.24\textwidth}
    \centering
    \includegraphics[width=\textwidth]{figures/more_size_plots/cf10_3comp/cf10_3comp_3comp_SelectAL_strat_evolution.pdf}
    \caption{SelectAL}
  \end{subfigure}
  \vspace{2ex} % Vertical spacing between rows
  \begin{subfigure}[t]{0.24\textwidth}
    \centering
    \includegraphics[width=\textwidth]{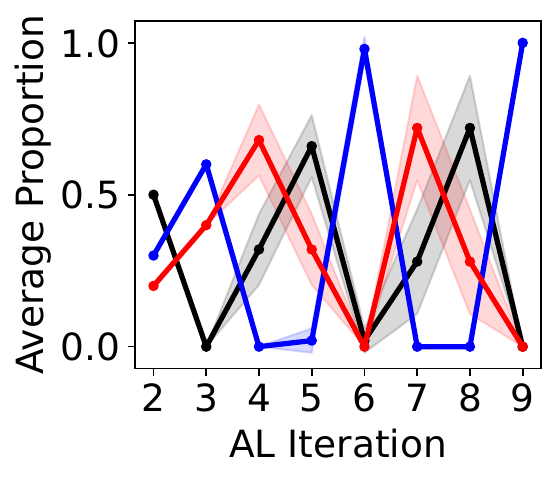}
    \caption{UCB}
  \end{subfigure}
  \begin{subfigure}[t]{0.24\textwidth}
    \centering
    \includegraphics[width=\textwidth]{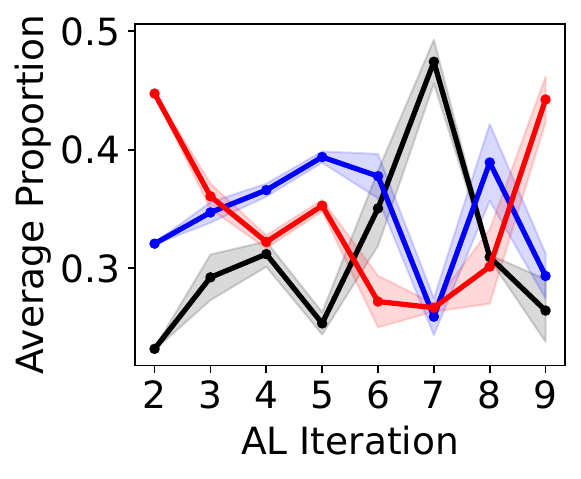}
    \caption{SCRiBLe}
  \end{subfigure}
  \begin{subfigure}[t]{0.24\textwidth}
    \centering
    \includegraphics[width=\textwidth]{figures/more_size_plots/cf10_3comp/cf10_3comp_3comp_ALBL_strat_evolution.pdf}
    \caption{ALBL}
  \end{subfigure}
  \begin{subfigure}[t]{0.24\textwidth}
    \centering
    \includegraphics[width=\textwidth]{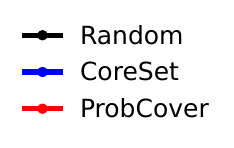}
  \end{subfigure}
  \caption{Setup (C1): CIFAR-10 with $K=3$. (a) Test performance of pure strategies. (b--f) Average per-round budget allocation to Random, CoreSet, and ProbCover under each method. FractAL drives CoreSet to zero and concentrates on ProbCover. SelectAL allocates roughly equal shares to Random and CoreSet, while bandit baselines (d--f) oscillate between strategies without converging. Lines show the mean and shaded bands show $\pm$ 1 standard deviation. For test accuracy, standard deviation is computed across seeds after averaging over splits; for allocations, across splits after averaging over seeds.}
  \label{fig:cf10_3comp_frac_evolution}
\end{figure}

\begin{figure}[t]
  \centering
  \begin{subfigure}[t]{0.24\textwidth}
    \centering
    \includegraphics[width=\textwidth]{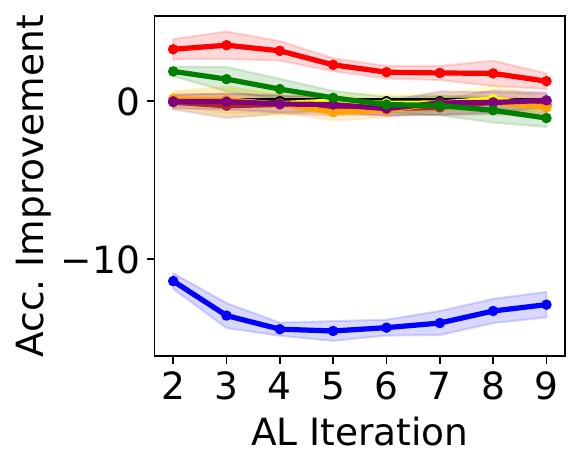}
    \caption{Test Performance}
  \end{subfigure}
  \hfill
  \begin{subfigure}[t]{0.24\textwidth}
    \centering
    \includegraphics[width=\textwidth]{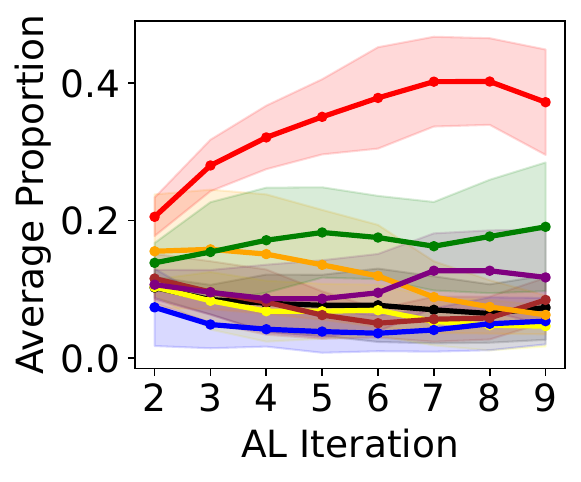}
    \caption{FractAL}
  \end{subfigure}
  \hfill
  \begin{subfigure}[t]{0.24\textwidth}
    \centering
    \includegraphics[width=\textwidth]{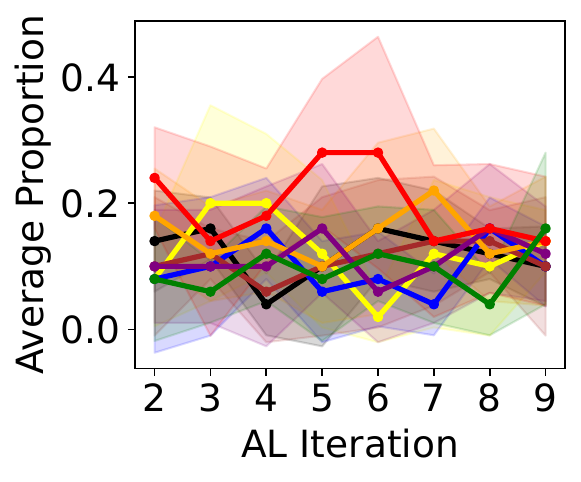}
    \caption{SelectAL}
  \end{subfigure}
  \vspace{2ex} % Vertical spacing between rows
  \begin{subfigure}[t]{0.24\textwidth}
    \centering
    \includegraphics[width=\textwidth]{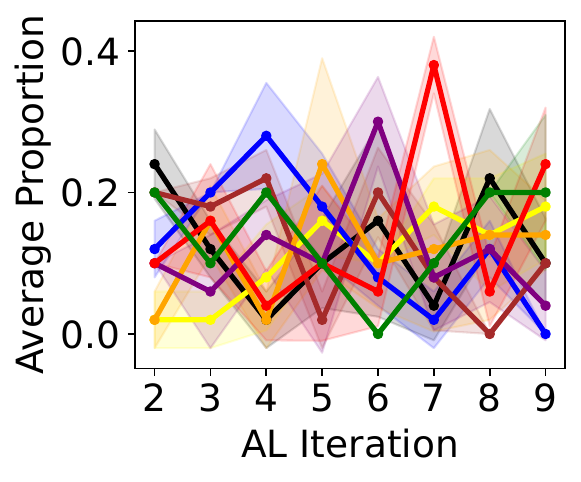}
    \caption{UCB}
  \end{subfigure}
  \begin{subfigure}[t]{0.24\textwidth}
    \centering
    \includegraphics[width=\textwidth]{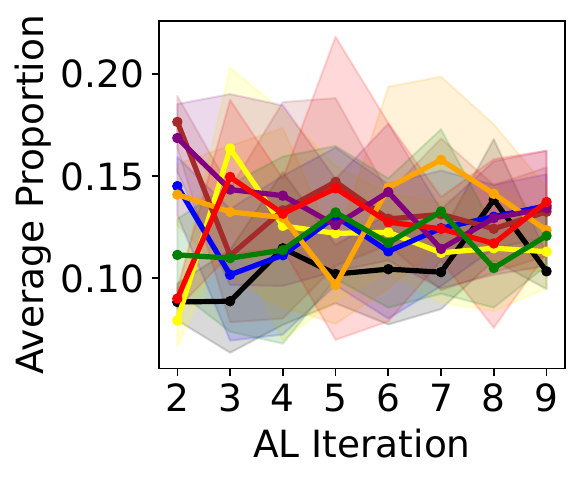}
    \caption{SCRiBLe}
  \end{subfigure}
  \begin{subfigure}[t]{0.24\textwidth}
    \centering
    \includegraphics[width=\textwidth]{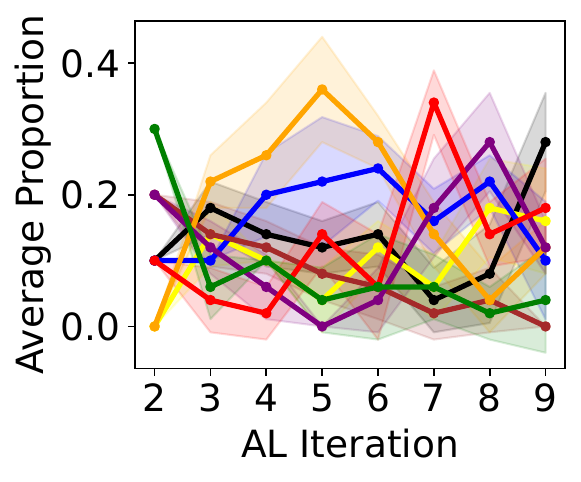}
    \caption{ALBL}
  \end{subfigure}
    \begin{subfigure}[t]{0.24\textwidth}
    \centering
    \includegraphics[width=0.7\textwidth]{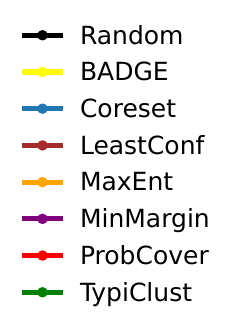}
    
  \end{subfigure}
  \caption{Setup (C2): CIFAR-10 with $K=8$. (a) Test performance of the 8 pure strategies. (b--f) Average per-round budget allocation to all 8 strategies under each method. FractAL increases the share of ProbCover and decreases CoreSet over rounds. Lines show the mean and shaded bands show $\pm$ 1 standard deviation. For test accuracy, standard deviation is computed across seeds after averaging over splits; for allocations, across splits after averaging over seeds.}
  \label{fig:cf10_8comp_frac_evolution}
\end{figure}

\begin{figure}[t]
  \centering
  \begin{subfigure}[t]{0.24\textwidth}
    \centering
    \includegraphics[width=\textwidth]{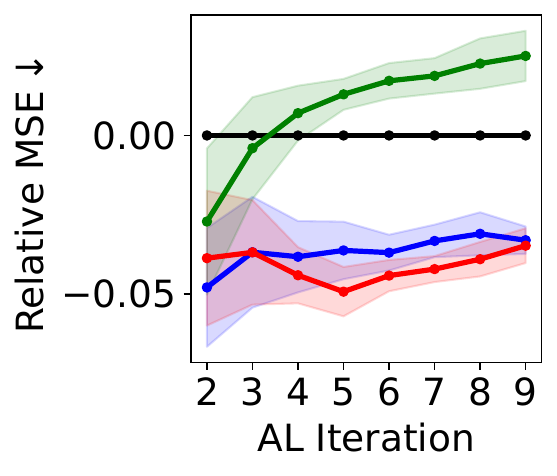}
    \caption{Test Performance}
  \end{subfigure}
  \hfill
  \begin{subfigure}[t]{0.24\textwidth}
    \centering
    \includegraphics[width=\textwidth]{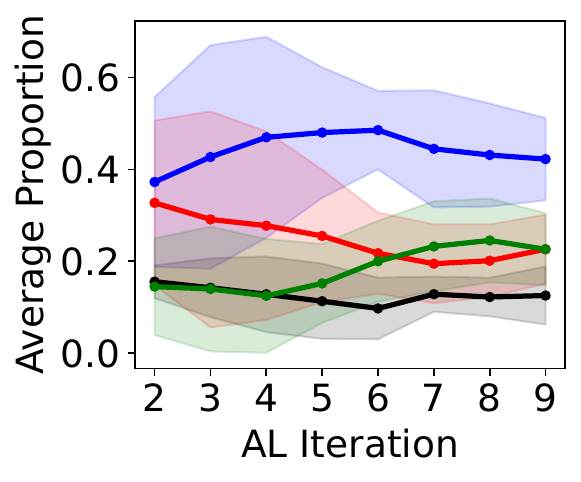}
    \caption{FractAL}
  \end{subfigure}
  \hfill
  \begin{subfigure}[t]{0.24\textwidth}
    \centering
    \includegraphics[width=\textwidth]{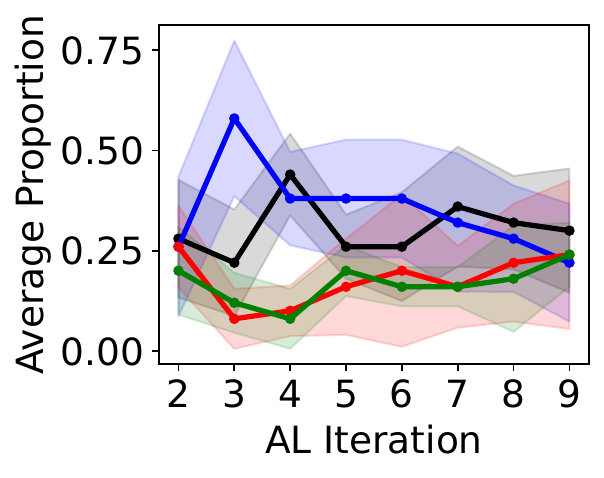}
    \caption{SelectAL}
  \end{subfigure}
  \vspace{2ex} % Vertical spacing between rows
  \begin{subfigure}[t]{0.24\textwidth}
    \centering
    \includegraphics[width=\textwidth]{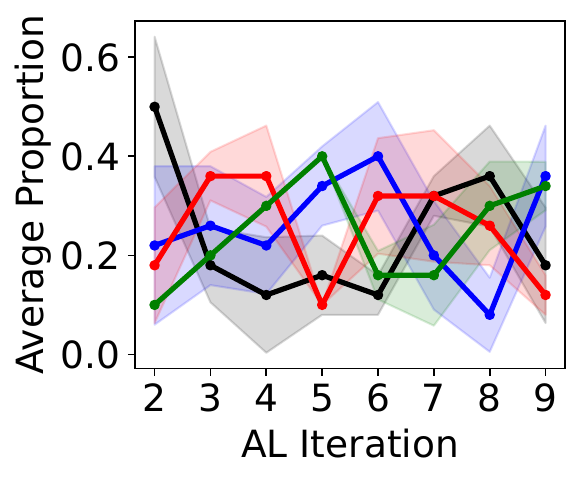}
    \caption{UCB}
  \end{subfigure}
  \begin{subfigure}[t]{0.24\textwidth}
    \centering
    \includegraphics[width=\textwidth]{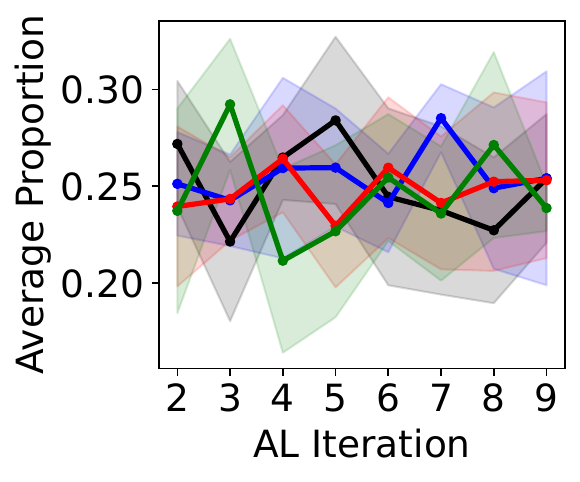}
    \caption{SCRiBLe}
  \end{subfigure}
  \begin{subfigure}[t]{0.24\textwidth}
    \centering
    \includegraphics[width=\textwidth]{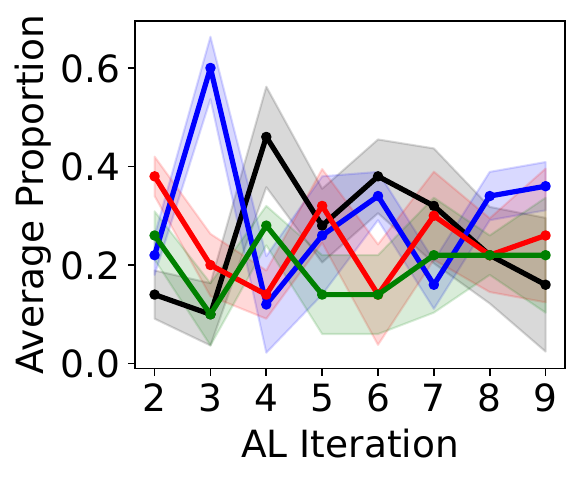}
    \caption{ALBL}
  \end{subfigure}
  \begin{subfigure}[t]{0.24\textwidth}
    \centering
    \includegraphics[width=\textwidth]{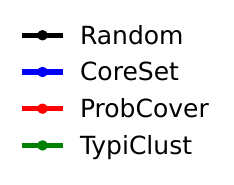}
  \end{subfigure}
\caption{KEGG metabolic reaction network experiment. (a) Test performance of the four pure strategies. (b--f) Average per-round budget allocation under each method. FractAL concentrates the budget on CoreSet (the second strongest strategy) while suppressing Random and TypiClust. SelectAL also upweights CoreSet but assigns ProbCover a lower share than Random consistently. The bandit baselines (d--f) oscillate between strategies without converging. Lines show the mean and shaded bands show $\pm$ 1 standard deviation. For test accuracy, standard deviation is computed across seeds after averaging over splits; for allocations, across splits after averaging over seeds.}
  \label{fig:kegg_frac_evolution}
\end{figure}

\begin{figure}[t]
  \centering
  \begin{subfigure}[t]{0.24\textwidth}
    \centering
    \includegraphics[width=\textwidth]{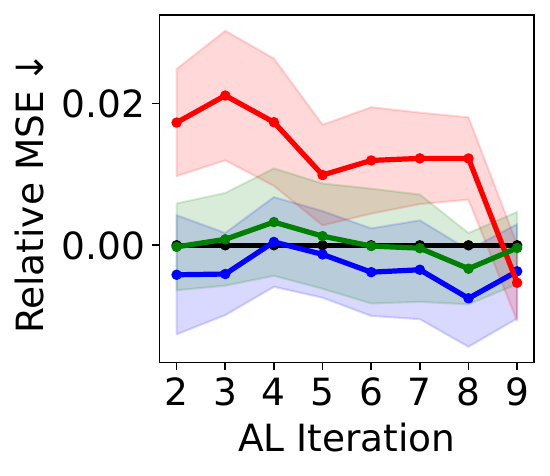}
    \caption{Test Performance}
  \end{subfigure}
  \hfill
  \begin{subfigure}[t]{0.24\textwidth}
    \centering
    \includegraphics[width=\textwidth]{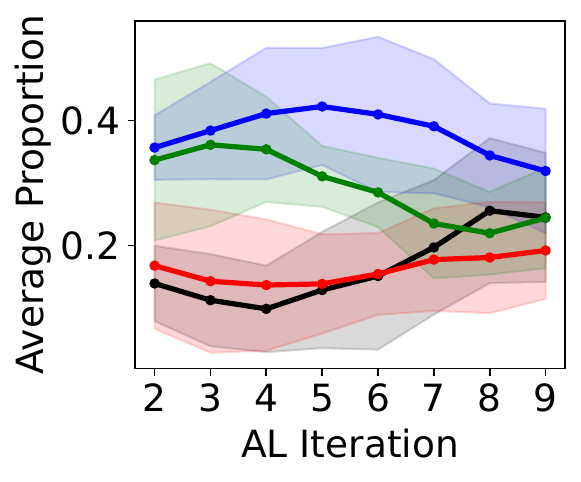}
    \caption{FractAL}
  \end{subfigure}
  \hfill
  \begin{subfigure}[t]{0.24\textwidth}
    \centering
    \includegraphics[width=\textwidth]{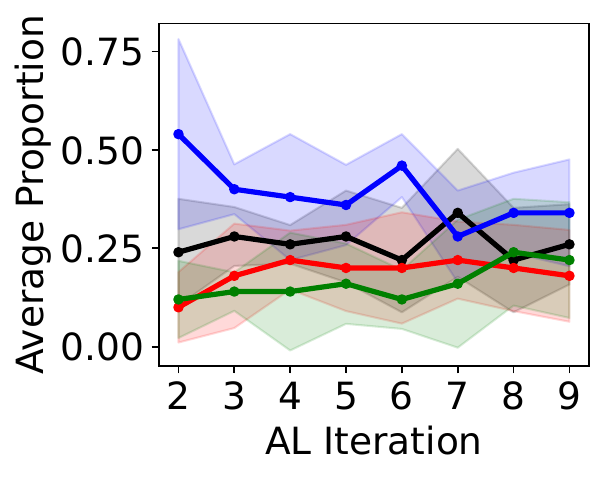}
    \caption{SelectAL}
  \end{subfigure}
  \vspace{2ex} % Vertical spacing between rows
  \begin{subfigure}[t]{0.24\textwidth}
    \centering
    \includegraphics[width=\textwidth]{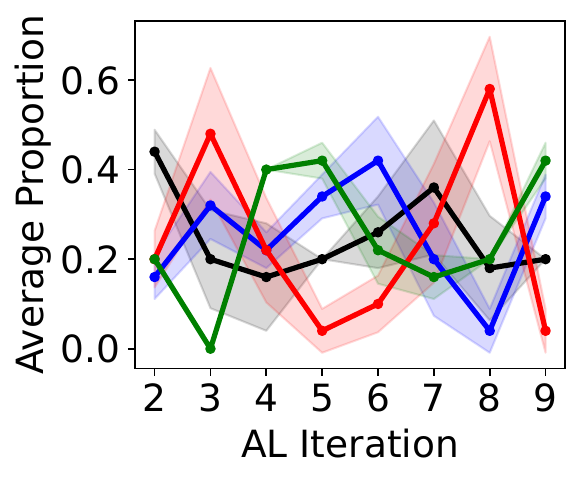}
    \caption{UCB}
  \end{subfigure}
  % \hspace{1em} 
  \begin{subfigure}[t]{0.24\textwidth}
    \centering
    \includegraphics[width=\textwidth]{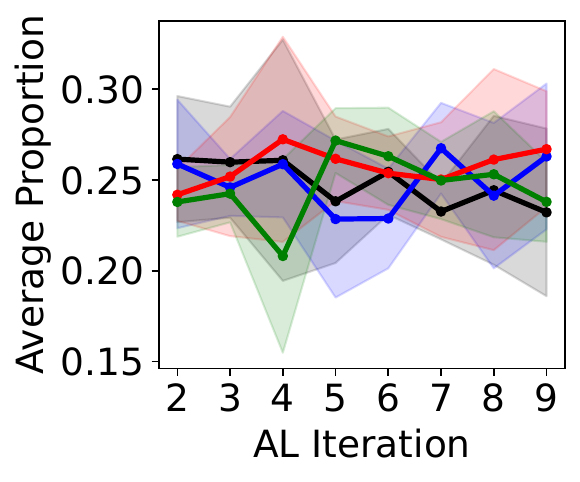}
    \caption{SCRiBLe}
  \end{subfigure}
  \begin{subfigure}[t]{0.24\textwidth}
    \centering
    \includegraphics[width=\textwidth]{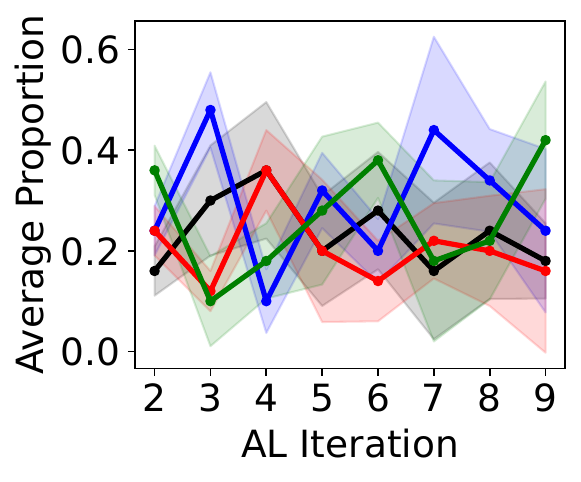}
    \caption{ALBL}
  \end{subfigure}
  \begin{subfigure}[t]{0.24\textwidth}
    \centering
    \includegraphics[width=\textwidth]{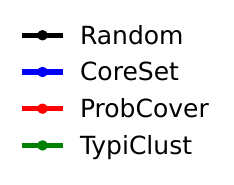}
  \end{subfigure}
\caption{SARCOS experiment. (a) Test performance of the four pure strategies. (b--f) Average per-round budget allocation under each method. FractAL and SelectAL upweight CoreSet (one of the strongest strategies) and suppress Random and ProbCover. SelectAL ranks TypiClust at the lowest allocation, despite it not being the weakest strategy; FractAL avoids this mistake. The bandit baselines (d--f) oscillate without converging. Lines show the mean and shaded bands show $\pm$ 1 standard deviation. For test accuracy, standard deviation is computed across seeds after averaging over splits; for allocations, across splits after averaging over seeds.}
  \label{fig:sarcos_frac_evolution}
\end{figure}

\begin{figure}[t]
  \centering
  \begin{subfigure}[t]{0.24\textwidth}
    \centering
    \includegraphics[width=\textwidth]{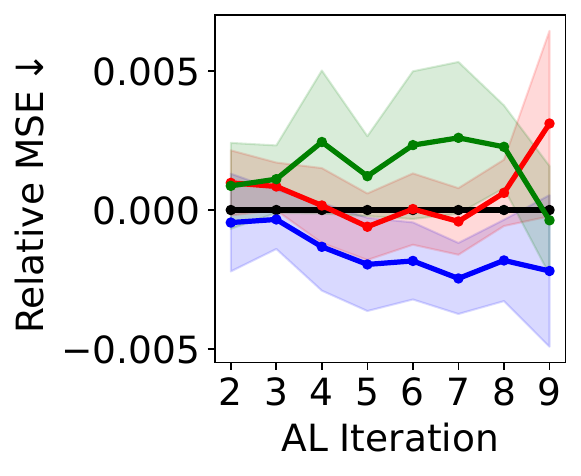}
    \caption{Test Performance}
  \end{subfigure}
  \hfill
  \begin{subfigure}[t]{0.24\textwidth}
    \centering
    \includegraphics[width=\textwidth]{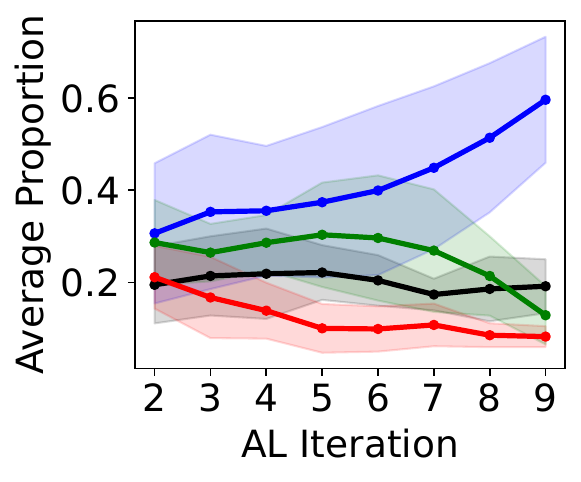}
    \caption{FractAL}
  \end{subfigure}
  \hfill
  \begin{subfigure}[t]{0.24\textwidth}
    \centering
    \includegraphics[width=\textwidth]{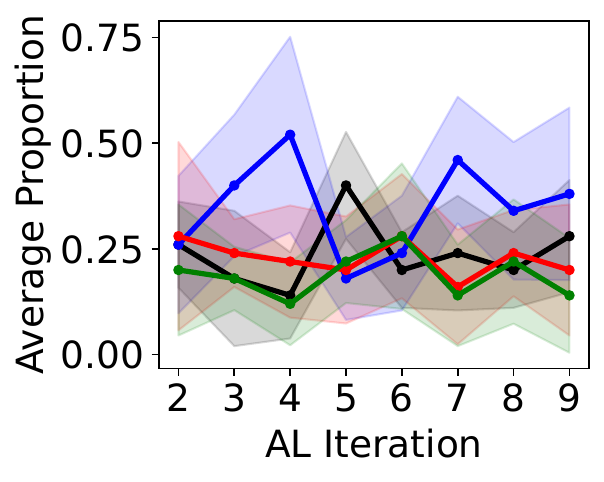}
    \caption{SelectAL}
  \end{subfigure}
  \vspace{2ex} % Vertical spacing between rows
  \begin{subfigure}[t]{0.24\textwidth}
    \centering
    \includegraphics[width=\textwidth]{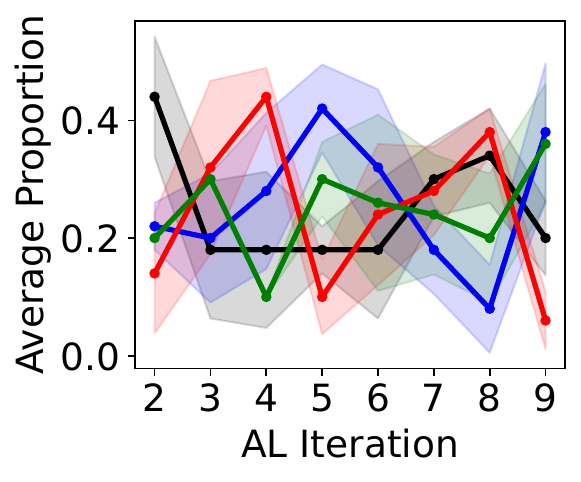}
    \caption{UCB}
  \end{subfigure}
  \begin{subfigure}[t]{0.24\textwidth}
    \centering
    \includegraphics[width=\textwidth]{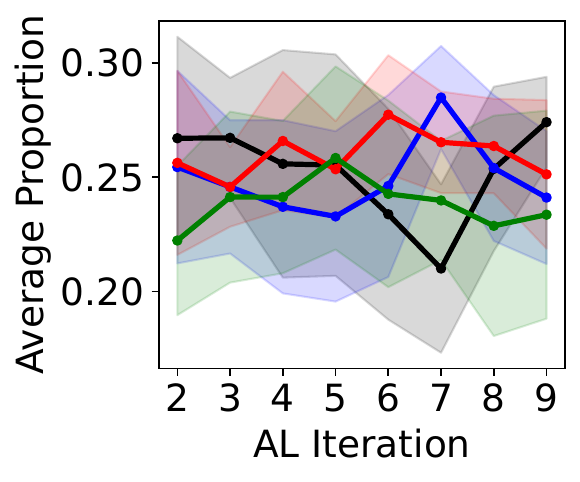}
    \caption{SCRiBLe}
  \end{subfigure}
  \begin{subfigure}[t]{0.24\textwidth}
    \centering
    \includegraphics[width=\textwidth]{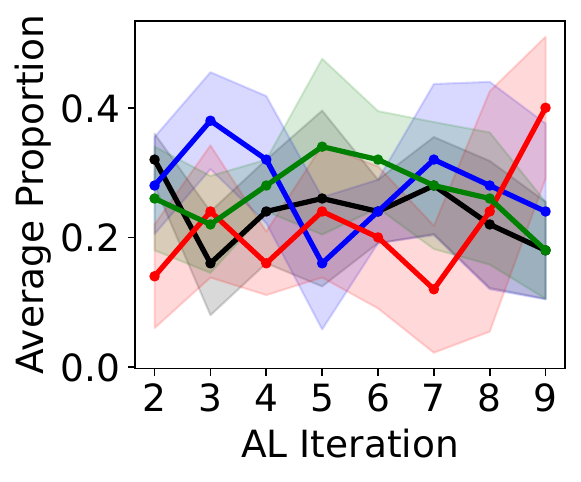}
    \caption{ALBL}
  \end{subfigure}
  \begin{subfigure}[t]{0.24\textwidth}
    \centering
    \includegraphics[width=\textwidth]{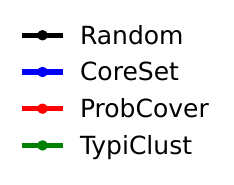}
  \end{subfigure}
  \caption{Diamonds experiment. (a) Test performance of the four pure strategies, which cluster within a narrow MSE range. (b--f) Average per-round budget allocation under each method. FractAL upweights CoreSet (the strongest individual strategy) and reduces the others, with the allocations aligned with the empirical ranking by the final rounds. The remaining methods (c--f) maintain roughly uniform allocations. Lines show the mean and shaded bands show $\pm$ 1 standard deviation. For test accuracy, standard deviation is computed across seeds after averaging over splits; for allocations, across splits after averaging over seeds.}
  \label{fig:dia_frac_evolution}
\end{figure}

\begin{figure}[t]
  \centering
  \begin{subfigure}[t]{0.24\textwidth}
    \centering
    \includegraphics[width=\textwidth]{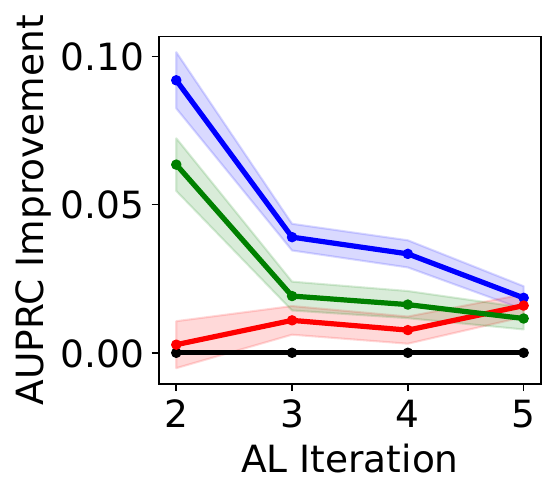}
    \caption{Test Performance}
  \end{subfigure}
  \hfill
  \begin{subfigure}[t]{0.24\textwidth}
    \centering
    \includegraphics[width=\textwidth]{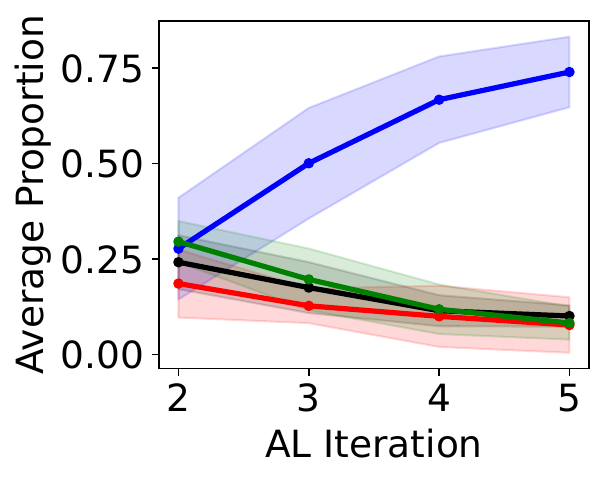}
    \caption{FractAL}
  \end{subfigure}
  \hfill
  \begin{subfigure}[t]{0.24\textwidth}
    \centering
    \includegraphics[width=\textwidth]{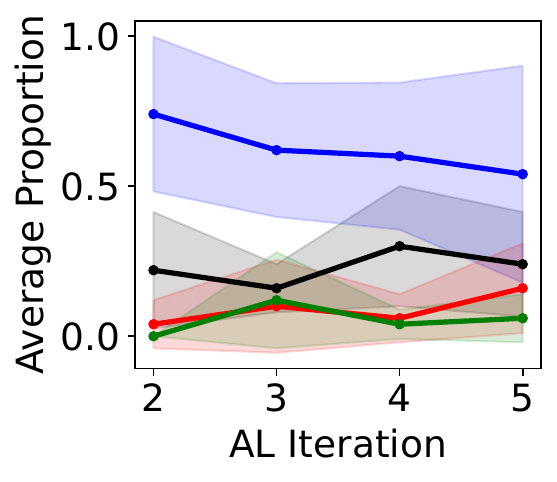}
    \caption{SelectAL}
  \end{subfigure}
  \vspace{2ex} % Vertical spacing between rows
  \begin{subfigure}[t]{0.24\textwidth}
    \centering
    \includegraphics[width=\textwidth]{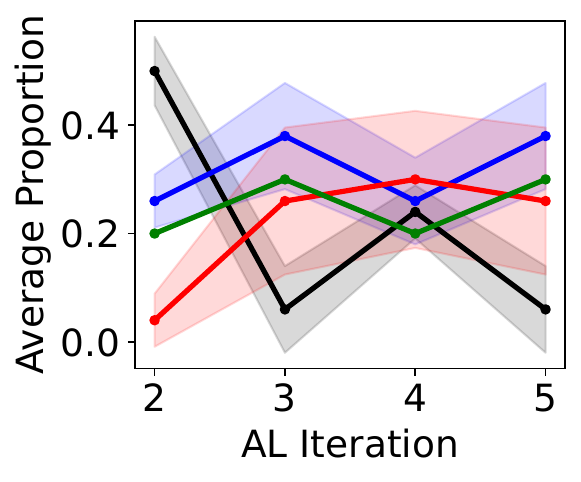}
    \caption{UCB}
  \end{subfigure}
  \begin{subfigure}[t]{0.24\textwidth}
    \centering
    \includegraphics[width=\textwidth]{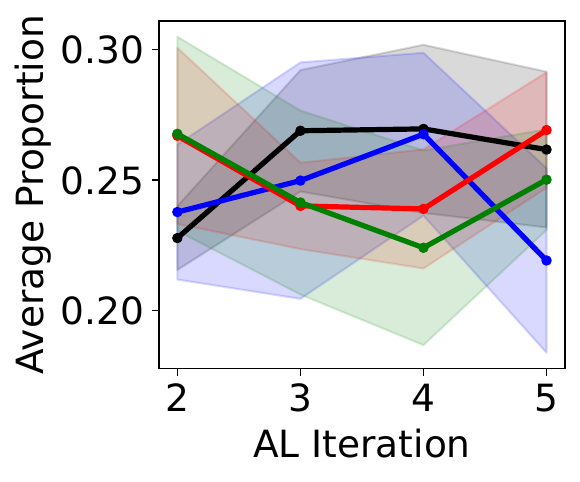}
    \caption{SCRiBLe}
  \end{subfigure}
  \begin{subfigure}[t]{0.24\textwidth}
    \centering
    \includegraphics[width=\textwidth]{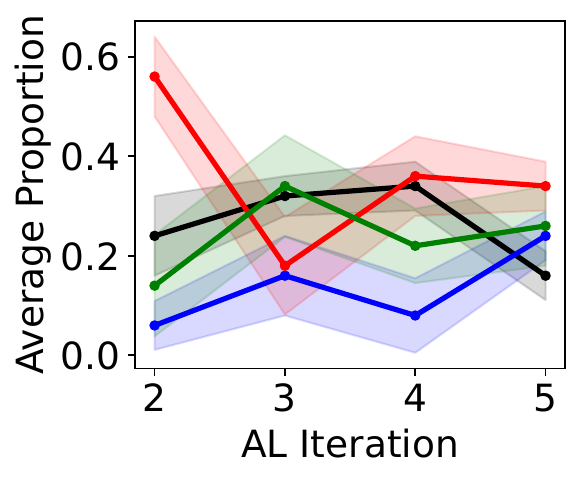}
    \caption{ALBL}
  \end{subfigure}
  \begin{subfigure}[t]{0.24\textwidth}
    \centering
    \includegraphics[width=\textwidth]{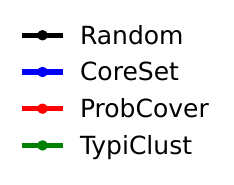}
  \end{subfigure}
\caption{Setup (B1): BMDM perturbation prediction. (a) Test AUPRC of the four pure strategies. (b--f) Average per-round budget allocation under each method. FractAL and SelectAL both upweight CoreSet (the strongest strategy) over rounds, but FractAL drives the other three strategies close to zero, whereas SelectAL retains substantial allocation to Random. The bandit baselines (d--f) oscillate between strategies without converging. Lines show the mean and shaded bands show $\pm$ 1 standard deviation. For test accuracy, standard deviation is computed across seeds after averaging over splits; for allocations, across splits after averaging over seeds.}
  \label{fig:pp_frac_evolution}
\end{figure}

\begin{figure}[t]
  \centering
  \begin{subfigure}[t]{0.24\textwidth}
    \centering
    \includegraphics[width=\textwidth]{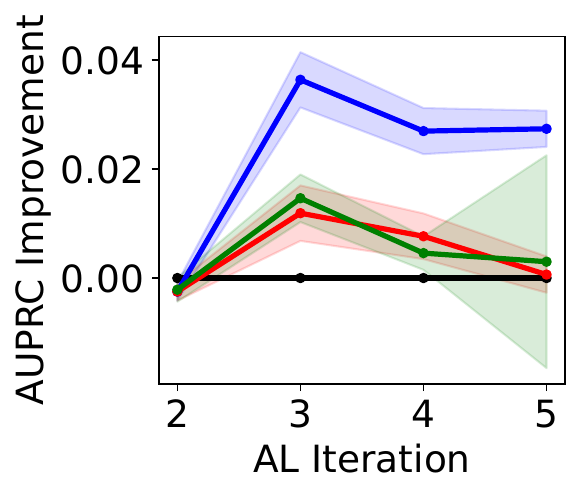}
    \caption{Test Performance}
  \end{subfigure}
  \hfill
  \begin{subfigure}[t]{0.24\textwidth}
    \centering
    \includegraphics[width=\textwidth]{figures/more_size_plots/bio_tcell/bio_tcell_4comp_FractAL_strat_evolution.pdf}
    \caption{FractAL}
  \end{subfigure}
  \hfill
  \begin{subfigure}[t]{0.24\textwidth}
    \centering
    \includegraphics[width=\textwidth]{figures/more_size_plots/bio_tcell/bio_tcell_4comp_SelectAL_strat_evolution.pdf}
    \caption{SelectAL}
  \end{subfigure}
  \vspace{2ex} % Vertical spacing between rows
  \begin{subfigure}[t]{0.24\textwidth}
    \centering
    \includegraphics[width=\textwidth]{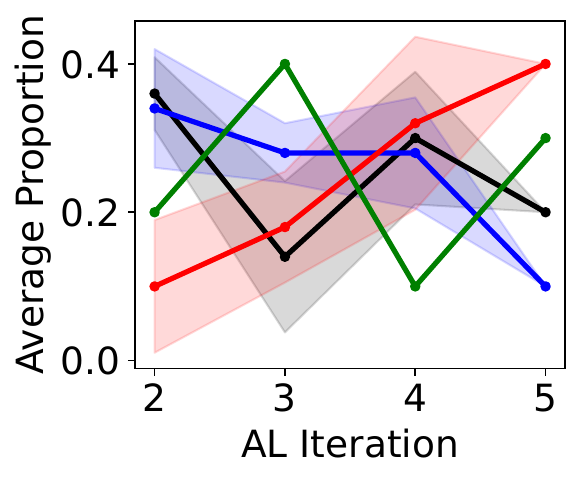}
    \caption{UCB}
  \end{subfigure}
  \begin{subfigure}[t]{0.24\textwidth}
    \centering
    \includegraphics[width=\textwidth]{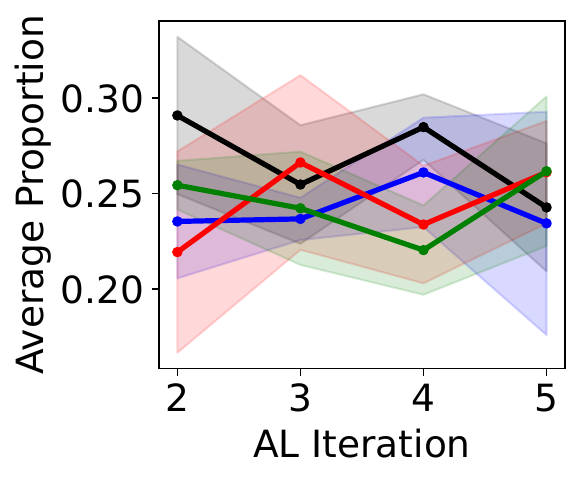}
    \caption{SCRiBLe}
  \end{subfigure}
  \begin{subfigure}[t]{0.24\textwidth}
    \centering
    \includegraphics[width=\textwidth]{figures/more_size_plots/bio_tcell/bio_tcell_4comp_ALBL_strat_evolution.pdf}
    \caption{ALBL}
  \end{subfigure}
  \begin{subfigure}[t]{0.24\textwidth}
    \centering
    \includegraphics[width=\textwidth]{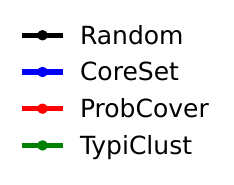}
  \end{subfigure}
  \caption{Setup (B2): CD4+ T cell perturbation prediction. (a) Test AUPRC of the four pure strategies. (b--f) Average per-round budget allocation under each method. FractAL consistently upweights CoreSet (the strongest strategy) and reduces the other three to similar low levels, consistent with their similar pure-strategy performance. SelectAL temporarily reduces CoreSet's allocation in rounds 3--5, and keeps ProbCover proportions high. The bandit baselines (d--f) oscillate between strategies without converging. Lines show the mean and shaded bands show $\pm$ 1 standard deviation. For test accuracy, standard deviation is computed across seeds after averaging over splits; for allocations, across splits after averaging over seeds.}
  \label{fig:tcell_frac_evolution}
\end{figure}

\end{document}